\newtheorem{theorem}{Theorem}[section]
\newtheorem{lemma}[theorem]{Lemma}
\definecolor{lightgray}{gray}{0.9}
\title{DRoP: Distributionally Robust Data Pruning}
\author{%
  Artem Vysogorets \\
  Data Science Platform\\
  Rockefeller University\\
  \texttt{amv458@nyu.edu} \\
  \And
  Kartik Ahuja \\
  Meta FAIR\\
  \And
  Julia Kempe\\
  New York University\\
  Meta FAIR}
\begin{document}

\maketitle

\begin{abstract}
In the era of exceptionally data-hungry models, careful selection of the training data is essential to mitigate the extensive costs of deep learning. Data pruning offers a solution by removing redundant or uninformative samples from the dataset, which yields faster convergence and improved neural scaling laws. However, little is known about its impact on classification bias of the trained models. We conduct the first systematic study of this effect and reveal that existing data pruning algorithms can produce highly biased classifiers. We present theoretical analysis of the classification risk in a mixture of Gaussians to argue that choosing appropriate class pruning ratios, coupled with random pruning within classes has potential to improve worst-class performance. We thus propose {\em DRoP}, a distributionally robust approach to pruning and empirically demonstrate its performance on standard computer vision benchmarks. In sharp contrast to existing algorithms, our proposed method continues improving distributional robustness at a tolerable drop of average performance as we prune more from the datasets. 
\end{abstract}

\section{Introduction}
\label{Sec:Introduction}

The ever-increasing state-of-the-art performance of deep learning models requires exponentially larger volumes of training data according to neural scaling laws \citep{laws2, kaplan2020scaling, laws3, laws1}. However, not all collected data is equally important for learning as it contains noisy, repetitive, or uninformative samples. A recent research thread on data pruning is concerned with removing those unnecessary data, resulting in improved convergence speed,  scaling, and resource efficiency \citep{forgetting, el2n, dynamic}. These methods design scoring mechanisms to assess the utility of each sample, often measured by its difficulty or uncertainty as approximated during a preliminary training round, that guides pruning. \citet{morcos} report that selecting high-quality data using these techniques can trace a Pareto optimal frontier, beating the notorious power scaling laws; \citet{kolossov2024towards} demonstrate that data selection can improve training.

A recent study by \citep{class-bias} hints that data pruning may mitigate distributional bias in trained models, which is a well-established issue in AI systems concerning the performance disparity across classes \citep{fairness, equality} or protected, minority groups of the population (e.g., race or gender). In Section \ref{Sec:DataPruning}, we conduct the first systematic evaluation of various pruning algorithms with respect to classification bias---a phenomenon characterized by highly disparate model performance across different classes---on a variety of benchmarks and conclude otherwise. For example, we find that Dynamic Uncertainty \cite{dynamic} achieves superior average test performance with VGG-19 on CIFAR-100 and ResNet-50 on ImageNet but fails miserably in terms of worst-class accuracy. Thus, we argue that it is imperative to benchmark pruning algorithms using a more comprehensive suite of metrics that reflect classification bias, and to develop solutions that address distributional robustness directly.

To understand the fundamental principles that exacerbate classification bias for various pruning methods, we present a theoretical analysis of linear decision rules for a mixture of two isotropic Gaussians in Section \ref{Sec:CaseStudy}. It also illustrates how simple random subsampling with difficulty-aware class-wise pruning quotas may yield better worst-class performance compared to existing, finer pruning algorithms that operate on a sample level. Based on these observations, we design a first ``robustness-aware'' data pruning protocol, coined Distributionally Robust Pruning (\emph{DRoP}). It selects target class proportions based on the corresponding class-wise error rates computed on a hold-out validation set after a preliminary training round on the full dataset. While these target quotas already improve robustness when combined with existing pruning algorithms, they particularly shine when applied together with random pruning in accordance with our theoretical analysis. Pruning with DRoP substantially reduces the classification bias of models even compared to training on the full dataset while offering enhanced data efficiency. {We further verified the effectiveness of our method for initially imbalanced datasets and in the context of group robustness.} 

In hindsight, DRoP carries some analogies with distributionally robust optimization methods that address class/group bias, such as subsampling \citep{barua, cui, tan} or cost-sensitive learning that constructs error-based importance scores to reweigh the training loss, emphasizing data from the more difficult or minority classes \citep{sinha, sagawa, jtt, idrissi, kd-1, kd-2, chen}. DRoP emulates this behavior with data pruning, retaining more samples from the more difficult classes. We discuss these similarities and compare DRoP to a prototypical cost-sensitive method, CDB-W \citep{sinha}, on a variety of dataset types to further illustrate the effectiveness of our method.

The summary of our contributions and the structure of the remainder of the paper are as follows.
\begin{itemize}
\item In Section \ref{Sec:DataPruning}, using a standard computer vision testbed, we conduct the first comprehensive evaluation of existing data pruning algorithms through the lens of classification bias for a variety of datasets and architectures;
\item In Section \ref{Sec:CaseStudy}, we provide our theoretical analysis in the Gaussian mixture model to illustrate increased bias of current pruning methods, show how to optimize for worst-class risk and thus motivate our proposed solution; 
\item In Section \ref{Sec:Results}, we propose a random pruning procedure with error-based class ratios coined {\em DRoP}, and verify its effectiveness in drastically reducing the classification bias. We also provide ablations to illustrate the strength of DRoP as a pruning method, which in some range is even able to improve over full-dataset cost-sensitive learning.
\end{itemize}

\section{Related Work}
\label{Sec:RelatedWork} 

In the era where the training corpora of contemporary models are of web-scale size, improving data efficiency has become the focus of practitioners and researchers alike. The corresponding literature is exceptionally rich, with a few fruitful and relevant research threads. {\em Dataset distillation} replaces the original samples with synthetically generated ones that bear compressed, albeit not as much interpretable, training signal \citep{distillation-1, distillation-2, distillation-3, distillation-4, distillation-5,feng2023embarrassingly}. {\em CoreSet methods} select representative samples that jointly capture the entire data manifold \citep{coreset, coreset-1, coreset-2, coreset-3, coreset-4, coreset-5}; they yield weak generalization guarantees for non-convex problems and are not too effective in practice, especially on larger datasets \citep{coreset-survey, el2n}. {\em Active learning} iteratively selects an informative subset of a larger pool of unlabeled data for annotation, which is ultimately used for supervised learning \citep{al-1, al-2, al-3, al-4}. {\em Subsampling} deletes instances of certain groups or classes when datasets are imbalanced, aiming to reduce bias and improve robustness of the downstream classifiers \citep{subsampling-1, barua, kartik}. 

\paragraph{Data Pruning.} More recently, data pruning emerged as a new research direction that simply removes parts of the dataset while maintaining strong model performance. In contrast to previous techniques, data pruning selects a subset of the original, fully labeled, and not necessarily imbalanced dataset, all while enjoying strong results in deep learning applications. Data pruning algorithms use the entire dataset $\mathcal{D}=\{X_i, y_i\}_{i=1}^N$ to optimize a preliminary query model $\psi_{\theta}$ parameterized by $\theta$ that most often assigns ``utility'' scores $A(\psi, X)$ to each training sample $X$; then, the desired fraction $s$ of the least useful instances is pruned from the dataset, yielding a sparse subset $\mathcal{D}_s=\{X\colon A(\psi, X)\geq\text{quantile}[A(\psi, \mathcal{D}), s]\}$. In their seminal work, \citet{forgetting} let $A(\psi, X)$ be the number of times $(X,y)$ is both learned and forgotten while training the query model. \citet{el2n} design a ``difficulty'' measure $A(\psi, X)=\lVert \sigma[\psi(X)]-y\rVert_2$ where $\sigma$ denotes the softmax function and $y$ is one-hot. These scores, coined EL2N, are designed to approximate the GraNd metric $A(\psi_{\theta}, X)=\lVert\nabla_{\theta}\mathcal{L}[\psi(X),y]\rVert_2$, which is simply the $\ell_2$-norm of the parameter gradient of the loss $\mathcal{L}$ computed at $(X,y)$. Both EL2N and GraNd scores require only a short training round (e.g., 10 epochs) of the query model. \citet{dynamic} propose to select samples according to their dynamic uncertainty throughout training of the query model. For each training epoch $k$, they estimate the variance of the target probability $\sigma_y[\psi(X)]$ across a fixed window of $J$ previous epochs, and finally average those scores across $k$. Since CoreSet approaches are highly relevant for data pruning, we also consider one such label-agnostic procedure that greedily selects training samples that best jointly cover the data embeddings extracted from the penultimate layer of the trained query model $\psi$ \citep{coreset}. While all these methods come from various contexts and with different motivations, several studies show that the scores computed by many of them exhibit high cross-correlation \citep{morcos, difficulty}.

\paragraph{Robustness \& Evaluation Metrics.} Distributional robustness in machine learning concerns the issue of non-uniform accuracy over the data distribution \citep{sagawa}. The majority of the vast research in this area focuses on group robustness where certain, often under-represented or sensitive, groups of the population have worse predictive performance \citep{fairness-hashimoto, fairness-mccoy}. In general, groups can be subsets of classes, and worst-group accuracy is a standard optimization criterion in this domain \citep{sagawa, kirichenko, rudner2024mind}. As a special case, fairness in machine learning aims to mitigate disparity of model performance across society and demographic groups and uses specific criteria such as equal opportunity and equalized odds \citep{odds, fairness-survey}. Many well-established algorithms in the space of distributional robustness apply importance weighting to effectively re-balance the long-tailed data distributions (cf. cost-sensitive learning \citep{cost-sensitive}). Thus, \citet{sagawa} optimize an approximate minimax \emph{DRO (distributionally robust optimization)} objective using a weighted sum of group-wise losses, putting higher mass on high-loss groups; \citet{sinha} weigh samples by the current class-wise misclassification rates measured on a holdout validation set; \citet{jtt} pre-train a reference model to estimate the importance factors of groups for  subsequent re-training. Similar cost-weighting strategies are adopted for robust knowledge distillation \citep{kd-1, kd-2} and online batch selection \citep{obs, rho-loss}. Note that these techniques compute weights at the level of classes or sensitive groups and not for individual training samples. The focus of our study is a special case of group robustness, \emph{classification bias}, where groups are directly defined by class attributions. Classification bias commonly arises in the context of imbalanced datasets where tail classes require upsampling or reweighting to produce models with strong worst-class performance \citep{cui, barua, tan, kartik}. However, classification bias is also studied for balanced datasets, and is found to be exacerbated by adversarial training \citep{wat, benz, robust-class-bias, fairness-through-robustness, ma} and network {\em parameter} pruning \citep{paganini, joseph, tran, good}. Following mainstream prior work \cite{wat, healthy-diet}, we will use a natural suite of metrics to measure classification bias. Given accuracy (recall) $r_k$ for each class $k\in[K]$, we report (1) worst-class accuracy $\min_kr_k$, (2) difference between the maximum and the minimum recall, $\max_kr_k-\min_kr_k$ \citep{joseph}, and (3) standard deviation of recalls $\text{std}_kr_k$ \citep{ma}.

\paragraph{Data Pruning Meets Robustness.} Although existing data pruning techniques have proven to achieve strong average generalization performance, to our knowledge,
our work contains the first comprehensive comparative study to call to the extensive worsening of classification bias of state-of-the-art data pruning methods. Among related works, \citet{class-bias} studied how EL2N pruning affects the class-wise performance and found that, at high data density levels (e.g., $80$--$90\%$ remaining data), under-performing classes improve their accuracy compared to training on the full dataset. \citet{healthy-diet} propose a modification of EL2N to achieve robustness across protected groups with two attributes (e.g., male and female) in datasets with counterfactual augmentation, which is a rather specific context. In this study, we eliminate this blind spot in the data pruning literature and analyze the trade-off between the average performance and classification bias exhibited by some of the most common data pruning methods: EL2N and GraNd \citep{el2n}, Forgetting \citep{forgetting}, Dynamic Uncertainty \citep{dynamic}, and CoreSet \citep{coreset}. We also benchmark random pruning (Random), which is regarded as a notoriously strong baseline in active learning and data pruning, especially when pruning large fractions of data.

\section{Data Pruning is not Robust}
\label{Sec:DataPruning}

\begin{wrapfigure}{r}{0.48\textwidth}
\vspace{-10px}
\centering
\includegraphics[width=0.48\textwidth]{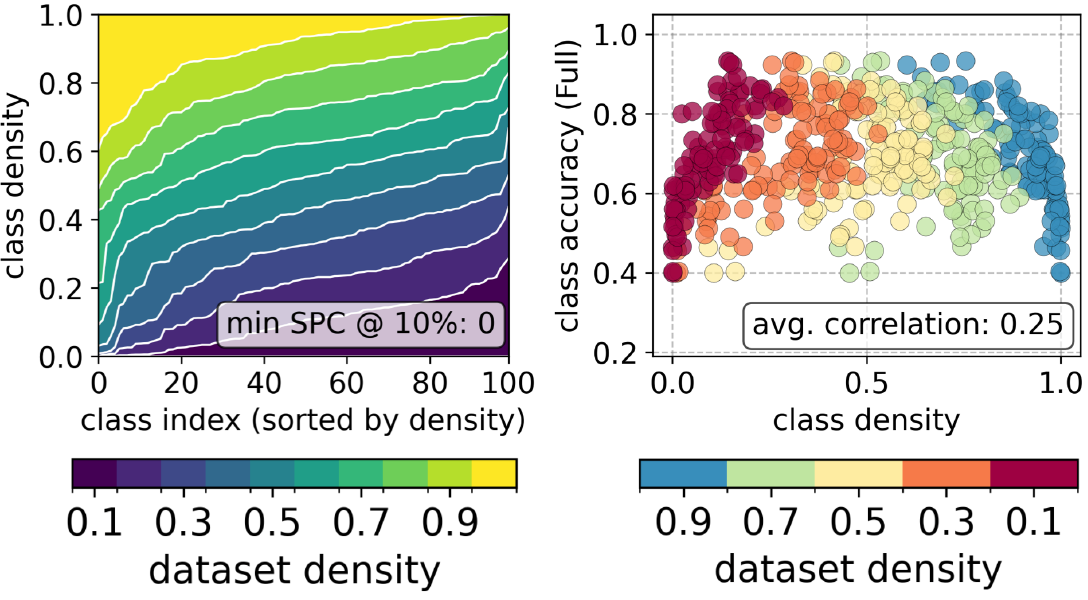}
\captionsetup{font=small}
\caption{{\bf Pruning Exacerbates Bias:} Dynamic Uncertainty applied to CIFAR-100. See Appendix \ref{App:Additional} for similar plots for other pruning methods and models. \textbf{Left:} Sorted class densities at different dataset density levels. We also report the minimum number of samples per class (SPC) at $10\%$ dataset density. \textbf{Right:} Full dataset test class-wise accuracy against dataset density. We also report the correlation coefficient between these two quantities across classes, averaged over $5$ dataset densities.}
\label{Fig:DynamicUncertainty}
\end{wrapfigure}

Figure \ref{Fig:Scatters-1} illustrates our evaluation of data pruning through the lens of class-wise robustness for two vision benchmarks, CIFAR-100 and TinyImageNet (additional plots for other metrics can be found in Appendix \ref{App:Full-Scatters}). We present the experimental details and hyperparameters in Appendix \ref{App:Hyperparameters}. First, we note that no pruning method is uniformly state-of-the-art, and the results vary considerably across model-dataset pairs. Among all algorithms, Dynamic Uncertainty with VGG-19 on CIFAR-100 and ResNet-50 on ImageNet presents a particularly interesting and instructive case. While it is arguably the best with respect to the average test performance, it fails miserably across all robustness metrics. Figure \ref{Fig:DynamicUncertainty} left reveals that it actually removes entire classes already at $10\%$ of CIFAR-100. Therefore, it seems plausible that Dynamic Uncertainty simply sacrifices the most difficult classes to retain strong performance on the easier ones. Figure \ref{Fig:Correlation-1} confirms our hypothesis: in contrast to other algorithms, at low density levels, Dynamic Uncertainty tends to prune classes with lower baseline accuracy (obtained from training on the full dataset) more aggressively, which entails a catastrophic classification bias hidden underneath a deceptively strong average accuracy. This observation presents a compelling argument for using criteria beyond average test performance for data pruning algorithms, particularly emphasizing classification bias.

\begin{figure}[H]
\centering
\includegraphics[width=\textwidth]{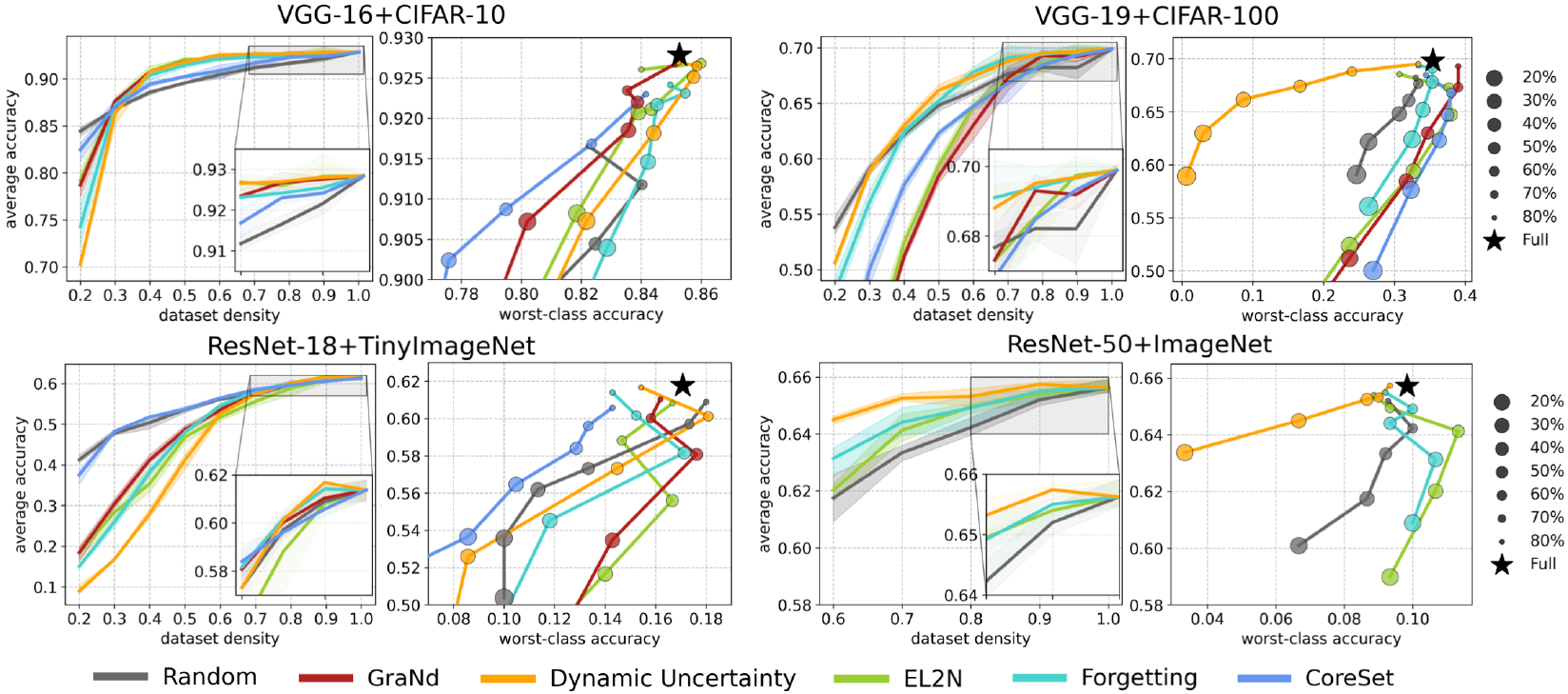}
\captionsetup{font=small}
\caption{The average test performance of various data pruning algorithms against dataset density (fraction of samples remaining after pruning) and worst-class accuracy. All results averaged over $3$ random seeds. Error bands represent min/max. Additional plots can be found in Appendix \ref{App:Full-Scatters}.}
\label{Fig:Scatters-1}
\end{figure}

Overall, all studied algorithms exhibit poor robustness to bias, although several of them improve ever so slightly over the full dataset. In particular, EL2N and GraNd achieve a relatively low classification bias, closely followed by Forgetting. At the same time, Forgetting has a substantially stronger average test accuracy compared to these three methods, falling short of Random only after pruning more than $60\%$ from CIFAR-100. As seen from additional plots in Appendix \ref{App:Additional}, Forgetting produces more balanced datasets than EL2N and GraNd at low densities (Figure \ref{Fig:Balance-1}), and tends to prune ``easier'' classes more aggressively compared to all other methods (Figure \ref{Fig:Correlation-1}). These two properties seem to be beneficial, especially when the available training data is scarce. 

\section{Theoretical Analysis}
\label{Sec:CaseStudy}
In this section, we derive analytical results for data pruning in a toy model of binary classification for a mixture of two univariate Gaussians with linear classifiers. Perhaps surprisingly, in this framework we can derive worst-class optimal densities as well as demonstrate how prototypical pruning algorithms fail with respect to class robustness.

Let $\mathcal{M}$ be a mixture of two univariate Gaussians with conditional density functions $p(x|y=0)=\mathcal{N}(\mu_0,\sigma_0^2)$ and $p(x|y=1)=\mathcal{N}(\mu_1,\sigma_1^2)$, priors $\phi_0=\mathbb{P}(y=0)$ and $\phi_1=\mathbb{P}(y=1)$ ($\phi_0+\phi_1=1$), and $\sigma_0<\sigma_1$. Without loss of generality, we assume $\mu_0<\mu_1$. Consider linear decision rules $t\in\mathbb{R}\cup\{\pm\infty\}$ with a prediction function $\hat{y}_t(x)=\mathbf{1}\{x>t\}.$ The statistical 0-1 risks of the two classes are
\begin{align}
\label{Eq:Risks}
R_0(t)=\Phi\left(\frac{\mu_0-t}{\sigma_0}\right),\qquad R_1(t)=\Phi\left(\frac{t-\mu_1}{\sigma_1}\right),
\end{align}
where $\Phi$ is the standard normal cumulative distribution function. Under some reasonable but nuanced conditions on the means, variances, and priors discussed in Appendix \ref{App:Gaussians:Minimizer}, the optimal decision rule minimizing the average risk 
\begin{align}
\label{eq:avgRisk}
R(t)=\phi_0R_0(t)+\phi_1R_1(t)
\end{align}
is computed by taking the larger of the two solutions to a quadratic equation $\partial R/\partial t=0$, which we denote by
\begin{align}
\label{Eq:Solution}
t^{*}\left(\frac{\phi_0}{\phi_1}\right)=\frac{\mu_0\sigma^2_1-\mu_1\sigma^2_0+ \sigma_0\sigma_1\sqrt{(\mu_0-\mu_1)^2-2(\sigma^2_0-\sigma^2_1)\log\frac{\phi_0\sigma_1}{\phi_1\sigma_0}}}{\sigma^2_1-\sigma^2_0}.
\end{align}
Note that $t^{*}(\phi_0/\phi_1)$ is the rightmost intersection point of $\phi_0f_0(t)$ and $\phi_1f_1(t)$ where $f_0$ and $f_1$ are the corresponding probability density functions \citep{cavalli}. Note that in the balanced case ($\phi_0=\phi_1$) the heavier-tailed class is more difficult and $R_1[t^{*}(1)]>R_0[t^{*}(1)]$.

\begin{figure}[t]
\centering
\vspace{-5px}
\includegraphics[width=\textwidth]{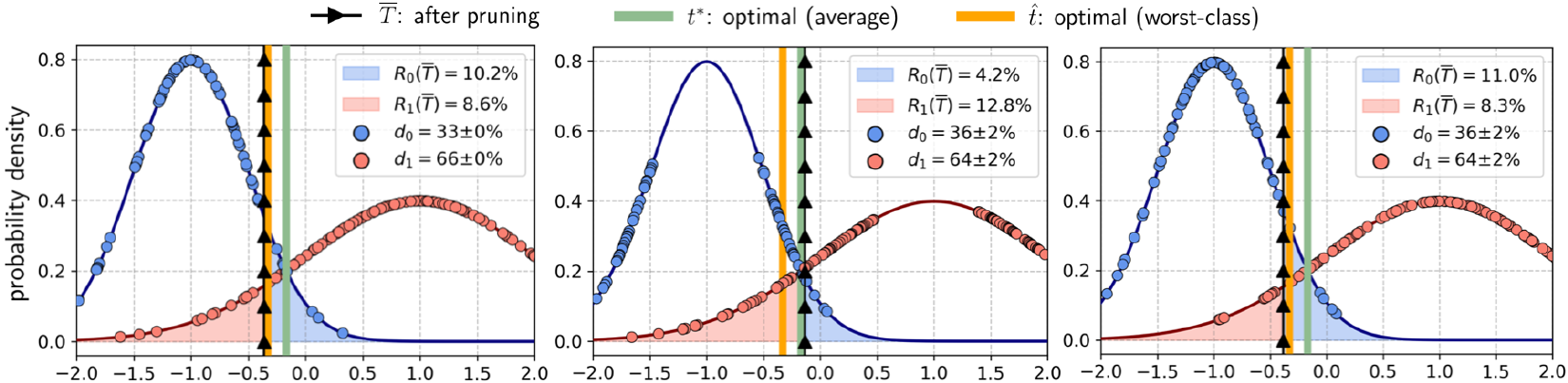}
 \captionsetup{font=small}
\caption{The effect of different pruning procedures on the solution mixture of Gaussians problem with $\mu_0=-1$, $\mu_1=1$, $\sigma_0=0.5$, $\sigma_1=1$, and $\phi_0=\phi_1$. Pruning to dataset density $d=50\%$. \textbf{Left:} Random pruning with the optimal class-wise densities that satisfy $d_1\phi_1\sigma_0=d_0\phi_0\sigma_1$. \textbf{Middle:} SSP. \textbf{Right:} Random pruning with respect to class ratios provided by the SSP algorithm. All results averaged across $10$ datasets $\{D_i\}_{i=1}^{10}$ each with $400$ points. The average ERM is $\overline{T}=\frac{1}{10}\sum_{i=1}^{10}T(D'_i)$ fitted to pruned datasets $D'_i$. The class risks of the average and worst-class optimal decisions for this Gaussian mixture are $R_0[t^{*}(1)]=4.8\%$, $R_1[t^{*}(1)]=12.1\%$, and $R_0(\hat{t})=R_1(\hat{t})=9.1\%$.}
\label{Fig:Theory}
\end{figure}

We now turn to the standard (class-) distributionally robust objective: minimizing worst-class statistical risk gives rise to the decision threshold denoted by $\hat{t}=\text{argmin}_t\max\{R_0(t), R_1(t)\}$. As we show in Appendix \ref{App:Gaussians:OptimalPriors}, $\hat{t}$ satisfies $R_0(\hat{t})=R_1(\hat{t})$, and Equation \ref{Eq:Risks} then immediately yields
\begin{align}\label{eq:that}
\hat{t}=(\mu_0\sigma_1+\mu_1\sigma_0)/(\sigma_0+\sigma_1).
\end{align}
Note that  $\hat{t}<t^{*}(1)$. This means that in the balanced case $\hat{t}$ is closer to $\mu_0$, the mean of the ``easier'' class. To understand how we should prune to achieve optimal worst-class accuracy, we compute class priors $\phi_0$ and $\phi_1$ that guarantee that the average risk minimization (Equation \ref{eq:avgRisk}) achieves the best worst-class risk. These priors can be seen as the optimal class proportions in a ``robustness-aware'' dataset. Observe that, from Equation \ref{Eq:Solution}, $t^{*}(\sigma_0/\sigma_1)=\hat{t}$ because the logarithm in the discriminant vanishes and we obtain Equation \ref{eq:that}. Therefore, the optimal average risk minimizer coincides with the solution to the worst-case error over classes, and is achieved when the class priors $\phi_0, \phi_1$ satisfy 
\begin{align}\label{Eq:optimal}
\phi_0/\phi_1=\sigma_0/\sigma_1.
\end{align}
Intuitively, sufficiently large datasets sampled from $\mathcal{M}$ with class-conditionally independent samples mixed in proportions $N_0$ and $N_1$ should have their empirical 0-1 risk minimizers close to $t^{*}(N_0/N_1)$.
That is, randomly retaining $d_k$ fraction of samples in class $k=0,1$ where $d_0N_0\sigma_1=d_1N_1\sigma_0$ (see Equation \ref{Eq:optimal}) yields robust solutions with equal statistical risk across classes and, hence, optimal worst-class error. Figure \ref{Fig:Theory} (left) confirms our analysis: the average empirical risk minimizer (ERM) $\overline{T}$ fitted to datasets pruned in this fashion lands near $\hat{t}$.

The class-conditional independence assumption we made above is crucial. While it is respected when subsampling randomly within each class, it clearly does not hold for existing, more sophisticated, data pruning algorithms. Therefore, even though they tend to prune easier classes more aggressively as evident from Figure \ref{Fig:Correlation-1}, they rarely enjoy any improvement of the worst-class performance compared to the original dataset. We further illustrate this observation by replicating a supervised variant of the Self-Supervised Pruning (SSP) developed by \citet{morcos} for ImageNet. In this algorithm, we remove samples globally (i.e., irrespective of their class membership) located within a certain margin $M>0$ of their class means. Intuitively, this algorithm discards the easiest or the most representative samples from the most densely populated regions of the distribution. As illustrated in Figure \ref{Fig:Theory} (middle), this method fortuitously prunes the easier class more aggressively; indeed, the amount of samples removed from a class with mean $\mu$ and variance $\sigma^2$ is proportional to the area under the probability density function over the pruning interval $[\mu-M,\mu+M]$, which is larger for smaller values of $\sigma$. Nevertheless, if the original dataset has classes mixed in proportions $N_0$ and $N_1$, the solution remains close to $t^{*}(N_0/N_1)$ even after pruning, as we show formally in Appendix \ref{App:Gaussians:SSP}. On the other hand, random subsampling according to class-wise pruning proportions defined by SSP does improve worst-class accuracy, as illustrated in the right plots of Figure \ref{Fig:Theory}. This corresponds to our observation in Figure \ref{Fig:Results} that random pruning respecting the class proportions discovered by GraNd and Forgetting often improves robustness compared to these methods themselves.

\begin{figure}[h]
\vspace{-10px}
\centering
\subfloat[\centering] {{\includegraphics[width=0.295\textwidth]{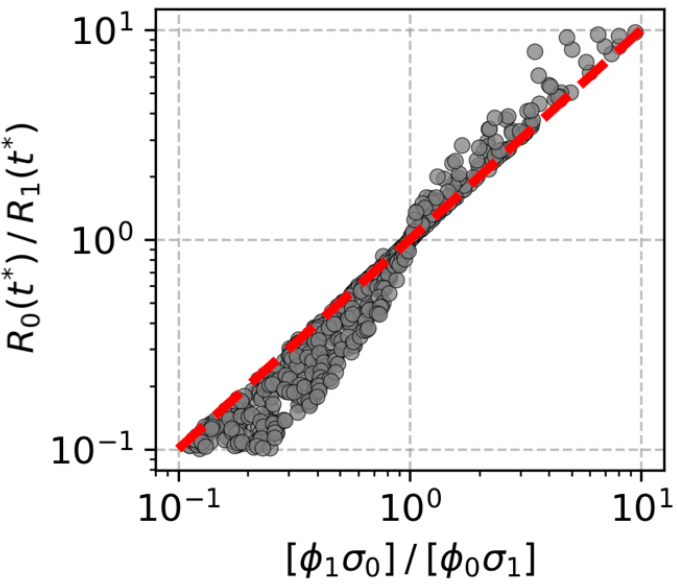}}}%
\quad
\subfloat[\centering]{{\includegraphics[width=0.66\textwidth]{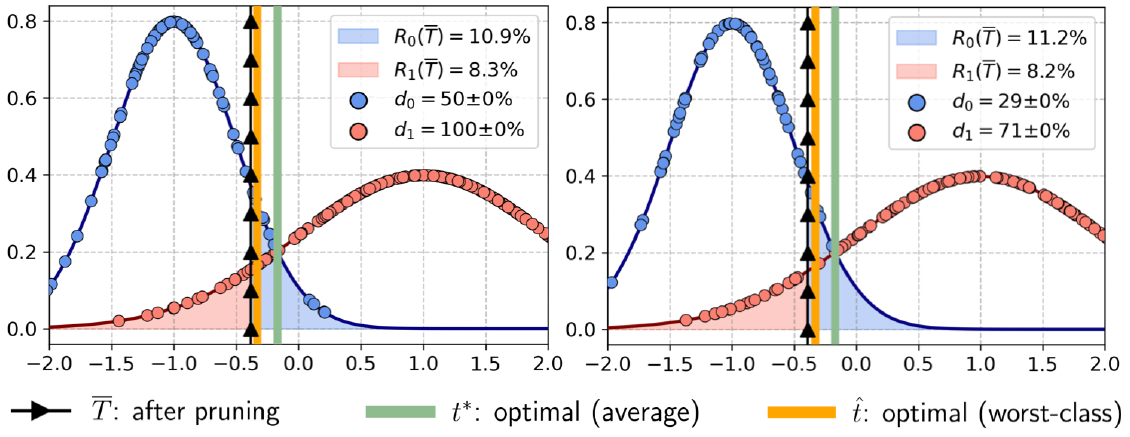}}}%
 \captionsetup{font=small}
\caption{\textbf{(a)}: Class-wise risk ratios of the optimal solution $t^{*}=t^{*}(\phi_0/\phi_1)$ vs. optimal ratios based on Equation \ref{Eq:optimal} computed for various $\sigma_0<\sigma_1$ drawn uniformly from $[10^{-2}, 10^2]$ and $\phi_0\sim U[0,1]$ and $\phi_1=1-\phi_0$. The results are independent of $\mu_0, \mu_1$. \textbf{(b)}: Random pruning with DRoP. \textbf{Left:} $d=75\%$; \textbf{Right:} $d=50\%$.}%
\label{Fig:Theory-MetriQ}
\end{figure}

A priori, it is unclear how exactly the variance-based worst-case optimal pruning quotas in Equation \ref{Eq:optimal} generalize to deep learning, and in particular how they can lead to practical pruning algorithms. One could connect our theory to the feature distribution in the penultimate layer of neural networks and determine class densities from cluster variances around class means. In fact, SSP \citep{morcos} uses such metrics to determine pruning scores for samples, using a pretrained model. However, such variances are noisy, especially for smaller class sizes, and hard to connect to class accuracies. Therefore, instead, we propose to use \textit{class errors} as a proxy for these variances in our DRoP method introduced in Section \ref{Sec:Results}. More formally, given a dataset originally with class sizes $N_0$ and $N_1$, we replace the worst-class optimal condition $d_0N_0\sigma_1=d_1N_1\sigma_0$ with $d_0R_1[t^{*}(N_0/N_1)]=d_1R_0[t^{*}(N_0/N_1)]$. To motivate this practical approach, Figure \ref{Fig:Theory-MetriQ}a shows that these new error-based class densities approximate the optimal variance-based ones fairly well, especially when $\sigma_1/\sigma_0$ is small. Figure \ref{Fig:Theory-MetriQ}b demonstrates that random pruning according to DRoP class proportions lands the average ERM near the worst-class optimal value. Thus, even though error-based class quotas do not enjoy simple theoretical guarantees, they still operate near-optimally in this toy setup.

\section{DRoP: Distributionally Robust Pruning}
\label{Sec:Results}

We are ready to propose our ``robustness-aware'' data pruning method, which consists in random subsampling according to carefully selected target class-wise sizes, incorporating lessons learned from our theoretical analysis. We propose to select the pruning fraction of each class based on its validation performance given a preliminary model $\psi$ trained on the whole dataset. Such a query model is still required by all existing data pruning algorithms to compute scores, so we introduce no additional resource overhead {(see Appendix \ref{App:Ablation} for more discussion).}

\setlength\parfillskip{0pt}\par\setlength\parfillskip{0pt plus 1fil}\begin{wrapfigure}{r}{0.42\textwidth}
\begin{minipage}{0.42\textwidth}
\vspace{-10px}
\footnotesize
\begin{algorithm}[H]
\SetAlgoLined
\textbf{Input:} Target dataset density $d\in[0,1]$. For each class $k\in[K]$: original size $N_k$, validation recall $r_k\in[0,1]$.\\ \textbf{Initialize: } Unsaturated set of classes $U\leftarrow[K]$, excess $E\leftarrow dN$, class densities $d_k\leftarrow 0\:\:\forall k\in[K]$.\\
\While{$E>0$}{
$Z\leftarrow \frac{1}{E}\sum_{k\in U}N_k(1-r_k)$\;
\For{$k\in U$}{
$d'_k\leftarrow (1-r_k)/Z$\;
$d_k\leftarrow d_k+d'_k$\;
$E\leftarrow E-N_kd'_k$\;
\If{$d_k>1$}{$U\leftarrow U\setminus\{k\}$\;
$E\leftarrow E+N_k(d_k-1)$\;$d_k\leftarrow1$}}}
\SetKwInOut{Return}{Return}
\Return{$\{d_k\}_{k=1}^K$.}
\caption{DRoP}
\label{Algoritm:MetriQ}
\end{algorithm}
\end{minipage}
\vspace{-10px}
\end{wrapfigure}
\noindent Consider pruning a $K$-way classification dataset originally with $N$ samples down to density $0\leq d\leq 1$, so the target dataset size is $dN$ (prior literature sometimes refers to $1-d$ as the pruning fraction). Likewise, for each class $k\in[K]$, define $N_k$ to be the original number of samples so $N=\sum_{k=1}^K N_k$, and let $0<d_k\leq 1$ be the desired density of that class after pruning. Then, we set $d_k \propto 1-r_k$ where $r_k$ denotes recall (accuracy) of class $k$ computed by $\psi$ on a hold-out validation set. In particular, we define \emph{DRoP quotas} as $d_k=d(1-r_k)/Z$ where $Z=\sum_{k=1}^K(1-r_k)N_k\big/N$ is a normalizing factor to ensure that the target density is respected, i.e., $dN = \sum_{k=1}^Kd_kN_k$. Alas, not all dataset densities $d\in[0,1]$ can be associated with a valid DRoP collection; indeed, for large enough $d$, the required class proportions may demand $d_k>1$ for some $k\in[K]$. In such a case, we do not prune such classes and redistribute the excess density across unsaturated ($d_k<1$) classes according to their DRoP proportions. 
The full procedure is described in Algorithm \ref{Algoritm:MetriQ}. This algorithm is guaranteed to terminate as the excess decreases in each iteration of the outer loop.

\paragraph{Evaluation.} To validate the effectiveness of random pruning with DRoP (Random+DRoP) in reducing classification bias, we compare it to various baselines derived from the strongest pruning algorithms: EL2N \citep{el2n} (for CIFAR-10 and ImageNet), GraNd \citep{el2n} and Forgetting \citep{forgetting} (for CIFAR-100 and TinyImageNet). In addition to plain random pruning (Random)---removing a random subset of all training samples---for each of these two strategies, we consider (1) random pruning that respects class-wise ratios automatically determined by the strategy (Random+StrategyQ), (2) applying the strategy for pruning within classes but distributing sample quotas across classes according to DRoP (Strategy+DRoP), and (3) the strategy itself. The motivation for (1) is the anecdotal observation that existing pruning algorithms automatically balance classes according to their validation errors fairly well (Figure \ref{Fig:Correlation-1}), while their overly scrupulous filtering within classes may be suboptimal. In support of this view, \citet{ayed} formally show that integrating random sampling into score-based pruning procedures improves their average performance. Next, (2) serves as a reasonable ablation, while (3) is the benchmark we compare to. The implementation details are listed in Appendix \ref{App:Hyperparameters}.

\begin{figure}[t]
\centering
\vspace{-15px}
\includegraphics[width=\textwidth]{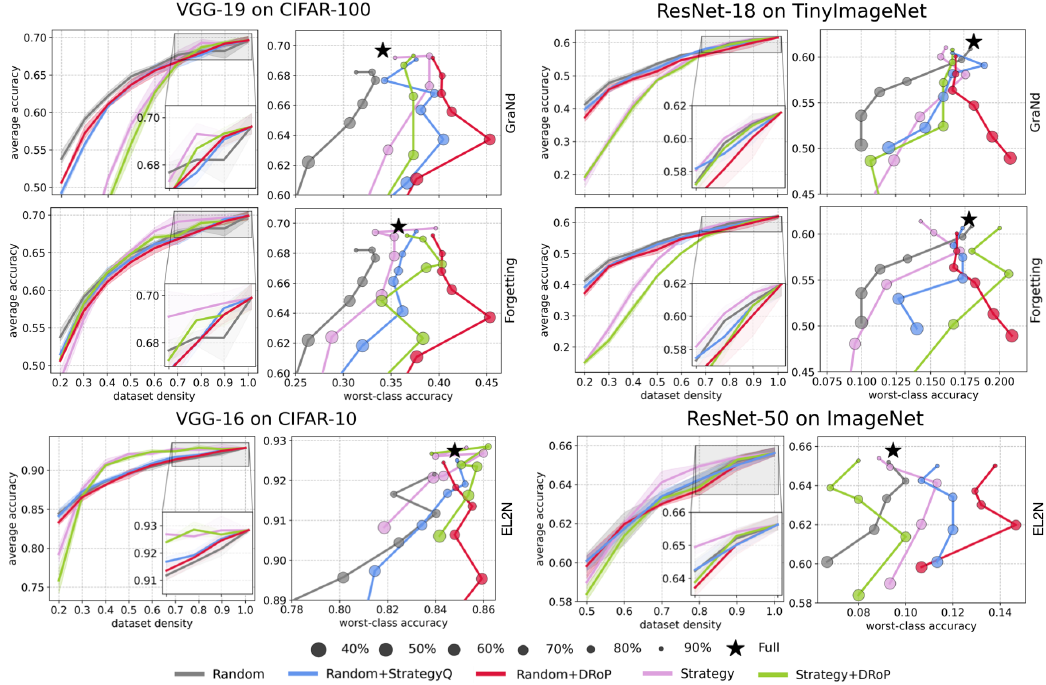}
\captionsetup{font=small}
\caption{The average test performance of various data pruning protocols against dataset density and worst-class accuracy. All results averaged over $3$ random seeds. Error bands represent min/max.}%
\label{Fig:Results}
\vspace{-5px}
\end{figure}

\paragraph{Results.} Figure \ref{Fig:Results} presents our empirical results (additional plots showcasing other robustness metrics can be found in Appendix \ref{App:Full-MetriQ}). Overall, DRoP with random pruning consistently exhibits a significant improvement in distributional robustness of the trained models. In contrast to all other baselines that arguably achieve their highest worst-class accuracy at high dataset densities ($80$--$90\%$), our method reduces classification bias induced by the datasets as pruning continues, e.g., up to $30$--$40\%$ dataset density of TinyImageNet. Notably, Random+DRoP improves all robustness metrics compared to the full dataset on all model-dataset pairs, offering both robustness and data efficiency at the same time. For example, when pruning half of CIFAR-100, we achieve an increase in the worst-class accuracy of VGG-19 from $35.8\%$ to $45.4\%$---an almost $10\%$ change at a price of under $6\%$ of the average performance. The leftmost plots in Figure \ref{Fig:Results} reveal that Random+DRoP does suffer a slightly larger degradation of the average accuracy as dataset density decreases compared to global random pruning, which is unavoidable given the natural trade-off between robustness and average performance. Yet at these low densities, the average accuracy of Random+DRoP exceeds that of all other pruning algorithms.

As demonstrated in Figure \ref{Fig:MetriQ}, DRoP produces exceptionally imbalanced datasets unless the density $d$ is too low by heavily pruning easy classes while leaving the more difficult ones intact. As expected from its design, the negative correlation between the full-dataset accuracy and density of each class is much more pronounced for DRoP compared to existing pruning methods (cf. Figure \ref{Fig:Correlation-1}).%
\noindent\begin{wrapfigure}{l}{0.55\textwidth}
\centering
\vspace{-5px}
\includegraphics[width=0.5\textwidth]{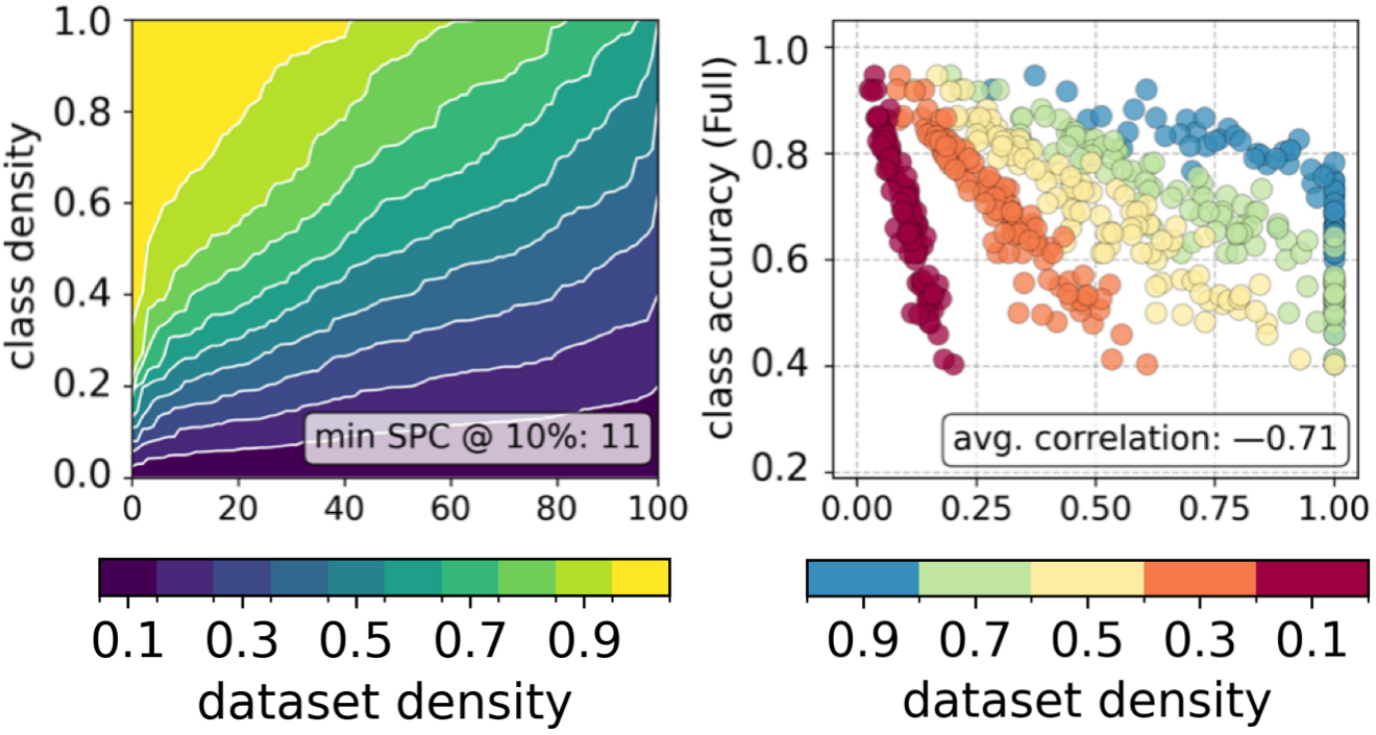}
 \captionsetup{font=small}
\caption{DRoP. \textbf{Left:} Sorted class densities at different dataset density levels. We report the minimum number of samples per class (SPC) at $10\%$ dataset density. \textbf{Right:} Full dataset test class-wise accuracy against dataset density. We also report the correlation coefficient between these two quantities across classes, averaged over $5$ dataset densities.}
\label{Fig:MetriQ}
\vspace{-15px}
\end{wrapfigure}%
Based on our examination of these methods in Section \ref{Sec:DataPruning}, we conjectured that these two properties are associated with smaller classification bias, which is well-supported by DRoP. Not only does it achieve unparalleled performance with random pruning, but it also enhances robustness of GraNd and Forgetting: Strategy+DRoP curves often trace a much better trade-off between the average and worst-class accuracies than their original counterparts (Strategy). At the same time, Random+StrategyQ fares similarly well, surpassing the vanilla algorithms, too. This indicates that robustness is achieved not only from the appropriate class ratios but also from pruning randomly as opposed to cherry-picking hard samples. 

\paragraph{A DRO baseline.} The results in Figure \ref{Fig:Results} provide solid evidence that Random+DRoP is by far the state-of-the-art data pruning algorithm in the robustness framework. Not only is it superior to prior pruning baselines, it in fact produces significantly more robust models compared to the full dataset, too. Thus, we go further and test our method against one representative cost-sensitive learning method from the DRO literature, Class-wise Difficulty Based Weighted loss (CDB-W) \citep{sinha}. In its simplest form adopted in this study, CDB-W dynamically updates class-specific weights $w_{k,t}=1-r_{k,t}$ at every epoch $t$ by computing recalls $r_{k,t}$ on a holdout validation set, which is precisely the proportions used by DRoP. Throughout training, CDB-W uses these weights to upweigh the per-sample losses based on the corresponding class labels. As part of the evaluation, we test if the robustness-driven optimization of CDB-W can help data pruning (for these experiments, we use EL2N). To this end, we consider two scenarios: training the final model with CDB-W on the EL2N-pruned dataset (EL2N+CDB-W), and using a robust query model trained by CDB-W to generate EL2N scores (CDB-W+EL2N).

\begin{figure}[t]
\centering
\includegraphics[width=\textwidth]{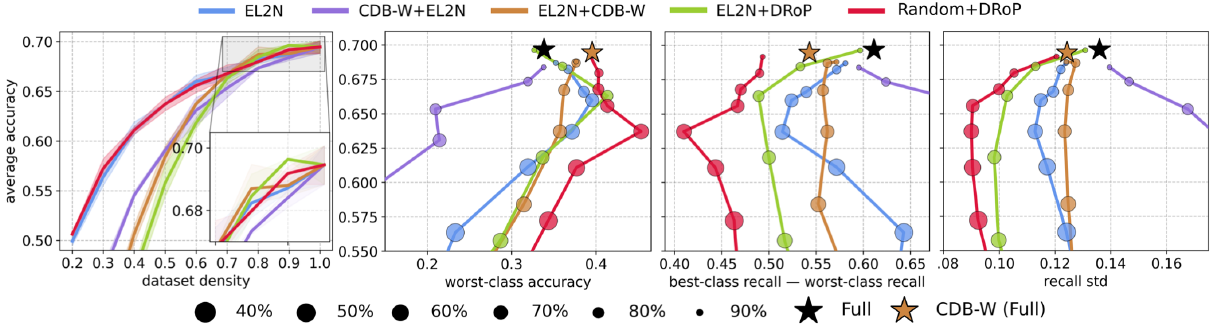}
 \captionsetup{font=small}
\caption{The average test performance of data pruning protocols against data density and measures of class robustness (VGG-19 on CIFAR-100). All results averaged over 3 random seeds.
Error bands represent min/max.}
\label{Fig:CDBW}
\end{figure}

Having access to the full dataset for training, CDB-W improves the worst-class accuracy of VGG-19 on CIFAR-100 by $5.7\%$ compared to standard optimization, which is almost twice as short of the increase obtained by removing $50\%$ of the dataset with Random+DRoP (Figure \ref{Fig:Results}). When applied to EL2N-pruned datasets, CDB-W maintains that original bias across sparsities, which is clearly inferior not only to Random+DRoP but also to EL2N+DRoP. Perhaps surprisingly, EL2N with scores computed by a query model trained with CDB-W fails spectacularly, inducing one of the worst bias observed in this study. Thus, DRoP can compete with other existing methods that directly optimize for worst-class accuracy.

\paragraph{Imbalanced datasets.} In the literature, robustness almost exclusively arises in the context of long-tailed distributions where certain classes or groups appear far less often than others; for example, CDB-W was evaluated in this setting. While dataset imbalance may indeed exacerbate implicit bias of the trained models towards more prevalent classes, our study demonstrates that the key to robustness lies in the appropriate, difficulty-based class proportions rather than class balance per se. Even though the overall dataset size decreases, pruning with DRoP can produce far more robust models compared to full but balanced datasets (Figure \ref{Fig:MetriQ}). Still, to promote consistency in evaluation and to further validate our algorithm, we consider long-tailed classification scenarios. We follow the approach used by \cite{cui} to inject imbalance into the originally balanced TinyImageNet. In particular, we subsample each class $k\in[K]$ and retain $\mu^{k-1}$ of its original size for some $\mu\in(0,1)$. For an initially balanced dataset, the size ratio between the largest ($k=1$) and the smallest ($k=K$) classes then becomes $\mu^{1-K}$, which is called the \emph{imbalance factor} denoted by $I$. Figure \ref{Fig:Imba} reveals that Random+DRoP consistently beats EL2N in terms of both average and robust performance across a range of imbalance factors ($I=2,5,20$). Likewise, it almost always reduces bias of the unpruned imbalanced TinyImageNet even when training with a robustness-aware CDB-W procedure.

\begin{figure}[h]
\centering
\includegraphics[width=\textwidth]{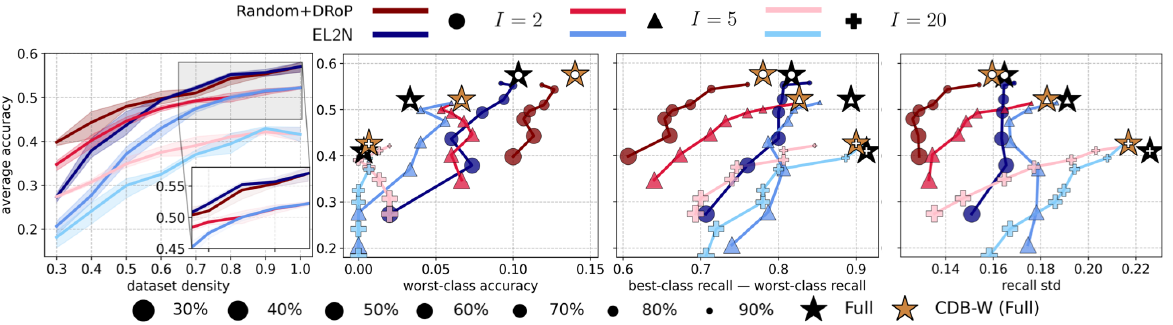}
 \captionsetup{font=small}
\caption{The average test performance of Random+DRoP (red-toned curves) and EL2N (blue-toned curves) against dataset density and measures of class robustness across dataset imbalance factors $I=2,5,20$. ResNet-18 on imbalanced TinyImageNet. Results averaged over 3 random seeds. Error bands represent min/max.}
\label{Fig:Imba}
\vspace{-10px}
\end{figure}

\paragraph{Group Distributional Robustness: Waterbirds.} While we scoped our study around robustness to classification bias, a vast majority of research is concerned with strong model performance across sensitive or minority groups within the distribution. One common benchmark dataset in this area is Waterbirds, which is a binary image classification of bird types where ground truth is spuriously correlated with image background (forest/water) for the majority of samples \cite{sagawa}. Thus, standard optimization techniques produce models that often make predictions based on the background type rather than bird features, yielding poor accuracy for images from waterbird+forest or landbird+water groups. We use the original learning setup introduced by \citet{sagawa} to evaluate DRoP against other pruning baselines as well as CDB-W. Figure \ref{Fig:Waterbirds} shows that Random+DRoP and EL2N+DRoP achieve a significant improvement of the worst-group accuracy and other metrics. For worst-class accuracy, we adopt a handful of prior art including Just Train Twice \citep{jtt}, Learning from Failure \citep{LfF}, Learning to Split \citep{ls}, and Bias Amplification \citep{bam}; the performance of these algorithms in this setup is reported by \citet{xrm}, which we copied verbatim. As evident from Figure \ref{Fig:Waterbirds}, Random+DroP is well on par with these more sophisticated techniques that operate on full datasets. Therefore, as expected, our algorithm applies not only to reduce classification bias but also to improve group-wise robustness. 

\begin{figure}[h]
\centering
\includegraphics[width=\textwidth]{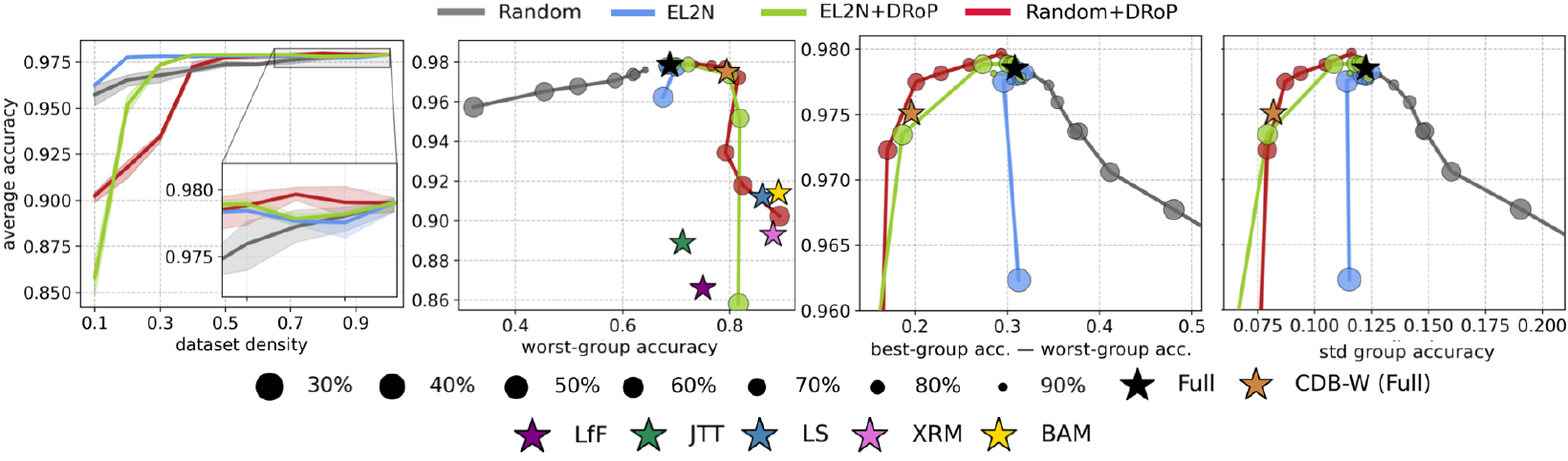}
\captionsetup{font=small}
\caption{The average test performance of data pruning protocols and existing baselines against measures of group-wise robustness (ResNet-50 on Waterbirds). The results of data pruning and CDB-W averaged over 3 random seeds. Error bands represent min/max. To conform with \citet{sagawa}, for this dataset, we compute average accuracy as a sum of group accuracies weighted by the original training group proportions. This explains the sharp degradation of the average performance of DRoP-backed pruning at low densities ($d\leq 0.4$): these datasets are skewed towards minority groups that weigh much less than severely pruned majority groups. Please see Appendix \ref{App:Waterbirds} for an extended discussion.}
\label{Fig:Waterbirds}
\vspace{-10px}
\end{figure}

\section{Discussion}
\label{Sec:Discussion}

Data pruning---removal of uninformative samples from the training dataset---offers much needed efficiency in deep learning. However, all existing pruning algorithms are currently evaluated exclusively on their average performance, ignoring their potentially disparate impact on model predictions across data distribution. Through a systematic study of the classification bias, we reveal that current methods often exacerbate the performance disparity across classes, which can deceptively co-occur with high average performance. This leads us to formulate error-based class-wise pruning quotas coined DRoP. At the same time, we find value in pruning randomly within classes, as opposed to cherry-picking individual samples, which is inherent to the existing data pruning techniques. We confirm the effectiveness of our method on a series of standard computer vision benchmarks; our simple pruning protocol traces the best trade-off between average and worst-class performance among all existing data pruning algorithms and related baselines. Additionally, we find theoretical justification for the phenomenal success of this simple strategy in a toy classification model.

\paragraph{Limitations \& Future Work.} In this study, we focused our empirical evaluation primarily on classification bias. Thus, we only scratched the surface of robustness in deep learning, which is often concerned with group-wise model performance. Further research may attempt to understand the effect of DRoP and data pruning at large on worst-group accuracy and spurious correlations more deeply. Finally, we attribute our contributions mostly to research on data pruning and, therefore, present limited cross-evaluation with a broad spectrum of distributionally robust optimization methods. Thus, future research is warranted to perform a more detailed comparative study between these approaches.

\section*{Acknowledgements}
AV and JK acknowledge support through the NSF under award 1922658. Part of this work was done while JK was visiting the Centre Sciences de Donnees (CSD) at the Ecole Normale Superieure in Paris, France, and JK wishes to thank the CSD for their hospitality. This work was supported through the NYU IT High Performance Computing (HPC) resources, services, and staff expertise.

\bibliographystyle{iclr2025_conference}
\bibliography{references}

\newpage
\appendix

\section{Implementation Details}
\label{App:Hyperparameters}
Our empirical work encompasses three standard computer vision benchmarks (Table \ref{Table:1}). All code is implemented in PyTorch \citep{pytorch} and run on an internal cluster equipped with NVIDIA RTX8000 GPUs. The total runtime of the empirical work presented in this paper is approximately $45,000$ GPU hours. We make our code available at \url{https://github.com/avysogorets/drop-data-pruning}.

\paragraph{Data Pruning.} Data pruning methods require different procedures for training the query model and extracting scores for the training data. For EL2N and GraNd, we use $10\%$ of the full training length reported in Table \ref{Table:1} before calculating the importance scores, which is more than the minimum of $10$ epochs recommended by \citet{el2n}. To improve the score estimates, we repeat the procedure across $5$ random seeds and average the scores before pruning. Forgetting and Dynamic Uncertainty operate during training, so we execute a full optimization cycle of the query model but only do so once. Likewise, CoreSet is applied once on the fully trained embeddings. We use the greedy k-center variant of CoreSet. Since some of the methods require a hold-out validation set (e.g., DRoP, CDB-W), we reserve $50\%$ of the test set for this purpose. This split is never used when reporting the final model performance.

\paragraph{Data Augmentation.} We employ data augmentation only when optimizing the final model. The same augmentation strategy is used for all datasets except for Waterbirds where we used none. In particular, we normalize examples per-channel and randomly apply shifts by at most $4$ pixels in any direction and horizontal flips.

\paragraph{Models \& Datasets.} As shown in Table \ref{Table:1}, we use the following model-dataset pairs: VGG-16 and VGG-19 \citep{vggnets} on CIFAR-10 and CIFAR-100 \citep{cifar}, respectively, ResNet-18 \citep{resnet} on TinyImageNet (MIT License) \citep{tiny}, ImageNet pre-trained ResNet-50 on Waterbirds \citep{sagawa} (MIT License), and ResNet-50 on ImageNet \citep{imagenet}. We also use a slight adaptation of Wide-ResNet-101 \citep{wrn} on CIFAR-100 with a few downsampling layers removed.

\begin{table}[h]
\centering
\small
\begin{tabular}{cccccccc}
\toprule
Model & $\:$Dataset$\:$ & $\:$Epochs$\:$ & $\:$Drop Epochs$\:$ & $\:$Batch$\:$ & $\:$LR$\:$ & $\:$Decay$\:$ \\
\midrule
VGG-16 & CIFAR-10 & $160$ & $80/120$ & $128$   & $0.1$ & $1e$-$4$ \\
VGG-19 & CIFAR-100   & $160$ & $80/120$ & $128$   & $0.1$ & $5e$-$4$ \\
ResNet-18 & TinyImageNet & $200$ & $100/150$ & $256$ & $0.2$ & $1e$-$4$ \\
ResNet-50 & ImageNet & $90$ & $30/60$ & $512$ & $0.2$ & $1e$-$4$ \\
ResNet-50 & Waterbirds & $300$ & \text{None} & $128$ & $0.0001$ & $1e$-$4$ \\
Wide-ResNet-101 & CIFAR-100 & $200$ & $60/120/160$ & $128$ & $0.1$ & $1e$-$4$ \\
\bottomrule
\end{tabular}
\vspace{10px}
\caption{Summary of experimental work and hyperparameters. All architectures include batch normalization \citep{batchnorm} layers followed by ReLU activations. Models are initialized with Kaiming normal \citep{kaiming} and optimized by SGD (momentum $0.9$) with a stepwise LR schedule ($0.2\times$ drop factor applied on specified Drop Epochs) and categorical cross-entropy. These hyperparameters are adopted from prior studies \citep{missingthemark, grasp, sagawa, ImageNet-hyper}.}
\label{Table:1}
\end{table}

\section{Theoretical Analysis for a Mixture of Gaussians}
\label{App:Gaussian}
Consider the Gaussian mixture model and the hypothesis class of linear decision rules introduced in Section \ref{Sec:CaseStudy}. Here we give a more formal treatment of the assumptions and claims made in that section.

\subsection{Average risk minimization}
\label{App:Gaussians:Minimizer}
Recall that we consider the average risk of the linear decision $\hat{y}_t(x)=\mathbf{1}\{x>t\}$ as $R(t) = \mathbb{E}_{x,y}[\ell(\hat{y}_t(x),y)]$, where the expectation is over $(x,y)\sim p(x,y)$ and class-conditional risk as $R_y(t) = \mathbb{E}_{x|y}[\ell(\hat{y}_t(x)),y)]$, where the expectation is over $x\sim p(x|y)$ for $y \in\{0,1\}$. If $\ell$ is the 0-1 loss, we thus obtain  Equations \ref{Eq:Risks} and \ref{eq:avgRisk}.
Recall that we have assumed $\sigma_0 < \sigma_1$ and $\mu_0 < \mu_1$.

We now give the precise conditions under which the average risk minimizer takes the form of $t^{*}(\phi_0/\phi_1)$ from Equation \ref{Eq:Solution}---the larger intersection point between the graphs of scaled probability density functions $\phi_0f_0(t)$ and $\phi_1f_1(t)$. 

First, we make an assumption that this intersection exists, i.e., the expression under the square root in Equation \ref{Eq:Solution} is non-negative. That is, we require
\begin{align}
\label{assumption-1}
\frac{\phi_0}{\phi_1}\geq\frac{\sigma_0}{\sigma_1}\text{exp}\left[-\frac{1}{2}\frac{(\mu_0-\mu_1)^2}{\sigma^2_1-\sigma^2_0}\right].
\end{align}
This assumption simply guarantees the existence of intersection points of the two scaled density functions; provided this holds, we establish an additional condition on priors necessary for $t^{*}(\phi_0/\phi_1)$ to be the average risk minimizer.

\begin{theorem}
If Equation \ref{assumption-1} holds, define $t^{*}(\phi_0/\phi_1)$ as in Equation \ref{Eq:Solution}. Then, $t^{*}(\phi_0/\phi_1)$ is the statistical risk minimizer for the Gaussian mixture model if
\begin{align}
\label{assumption-2}
\frac{\phi_0}{\phi_1}>\Phi\left(\frac{t^{*}(\phi_0/\phi_1)-\mu_1}{\sigma_1}\right)\Bigg/\Phi\left(\frac{t^{*}(\phi_0/\phi_1)-\mu_0}{\sigma_0}\right).
\end{align}
\end{theorem}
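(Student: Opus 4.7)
The plan is to reduce the theorem to analyzing the critical points and boundary behavior of the one-dimensional function $R(t)=\phi_0R_0(t)+\phi_1R_1(t)$ on the extended real line $\mathbb{R}\cup\{\pm\infty\}$. First, I would differentiate $R$: applying the chain rule with $\Phi'$ equal to the standard normal density yields
\begin{align*}
R'(t) = \phi_1 f_1(t) - \phi_0 f_0(t),
\end{align*}
so critical points of $R$ are exactly intersection points of the scaled pdfs $\phi_0 f_0$ and $\phi_1 f_1$. Setting $R'(t)=0$ and taking logs reduces to a quadratic in $t$ whose leading coefficient $\tfrac{1}{2\sigma_0^2}-\tfrac{1}{2\sigma_1^2}$ is strictly positive since $\sigma_0<\sigma_1$. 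Assumption (\ref{assumption-1}) guarantees a nonnegative discriminant, producing two real roots $t_1\le t_2$, with $t_2=t^{*}(\phi_0/\phi_1)$ the expression from Equation \ref{Eq:Solution}.

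Next, I would extract the sign pattern of $R'$ from this quadratic. Because the leading coefficient is positive, $R'(t)>0$ for $t<t_1$ and $t>t_2$, while $R'(t)<0$ for $t_1<t<t_2$. Hence $t_1$ is a local maximum and $t_2$ is a local minimum of $R$, and the global minimum on $\mathbb{R}\cup\{\pm\infty\}$ must lie in $\{-\infty,\,t_2,\,+\infty\}$ with corresponding values $\phi_0$, $R(t_2)$, and $\phi_1$ (the endpoint values being the limits $\lim_{t\to-\infty}R(t)=\phi_0$ and $\lim_{t\to+\infty}R(t)=\phi_1$).

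The comparison $R(t_2)<\phi_1$ is automatic: $R$ is strictly increasing on $(t_2,+\infty)$ with limit $\phi_1$, so $R(t_2)<R(t)<\phi_1$ for any $t>t_2$. It therefore remains only to compare $R(t_2)$ with $\phi_0$. Expanding $R(t_2)$ via Equation \ref{Eq:Risks} and applying $\Phi(-z)=1-\Phi(z)$, the inequality $R(t_2)<\phi_0$ rearranges step by step to
\begin{align*}
\frac{\phi_0}{\phi_1} \;>\; \frac{\Phi\!\left((t_2-\mu_1)/\sigma_1\right)}{\Phi\!\left((t_2-\mu_0)/\sigma_0\right)},
\end{align*}
which is exactly hypothesis (\ref{assumption-2}) with $t_2=t^{*}(\phi_0/\phi_1)$. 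Combining these observations identifies $t^{*}(\phi_0/\phi_1)$ as the global minimizer.

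The main obstacle I anticipate is the sign analysis of $R'$: one must correctly identify which of the two roots is a local minimum versus local maximum, which rests on matching the sign of the quadratic's leading coefficient with the ordering of $t_1,t_2$, and on verifying that $t_2$ from the sign analysis coincides with the closed-form $t^{*}(\phi_0/\phi_1)$ in Equation \ref{Eq:Solution}. Once the sign pattern is established, the remainder is routine algebra using $\Phi(-z)=1-\Phi(z)$ together with monotonicity of $R$ near $\pm\infty$.
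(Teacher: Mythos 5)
Your proposal is correct and follows essentially the same route as the paper: differentiate $R$, reduce $R'=0$ to a quadratic with positive leading coefficient, use the resulting sign pattern to identify $t_-$ as a local maximum and $t_+=t^{*}(\phi_0/\phi_1)$ as the only interior candidate while ruling out $+\infty$, and then observe that the remaining comparison $R(t_+)<R(-\infty)=\phi_0$ rearranges exactly to the hypothesis in Equation \ref{assumption-2}. Your explicit computation of the endpoint limits $\phi_0$ and $\phi_1$ is a small addition the paper leaves implicit, but the argument is the same.
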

\begin{proof}
For a decision rule $t\in\mathbb{R}\cup\{\pm\infty\}$, the statistical risk in the given Gaussian mixture model is given in Equation \ref{Eq:Risks}. The global minimum is achieved either at $\pm\infty$ or on $\mathbb{R}$. In the latter case, the minimizer is a solution to $\partial R(t)/\partial t = 0$:
\begin{align}
0=\frac{\partial R(t)}{\partial t}&=-\frac{\phi_0}{\sqrt{2\pi}\sigma_0}\text{exp}\left[-\frac{1}{2}\left(\frac{\mu_0-t}{\sigma_0}\right)^2\right]+\frac{\phi_1}{\sqrt{2\pi}\sigma_1}\text{exp}\left[-\frac{1}{2}\left(\frac{t-\mu_1}{\sigma_1}\right)^2\right].\nonumber
\end{align}
Rearranging and taking the logarithm on both sides yields
\begin{align}
\label{Eq:QE}
0=-2\log\left[\frac{\phi_0\sigma_1}{\phi_1\sigma_0}\right]-\left(\frac{t-\mu_1}{\sigma_1}\right)^2+\left(\frac{\mu_0-t}{\sigma_0}\right)^2,
\end{align}
which is a quadratic equation in $t$ with solutions
\begin{align}
\label{Eq:tpm}
t_{\pm}=\frac{\mu_0\sigma^2_1-\mu_1\sigma^2_0\pm \sigma_0\sigma_1\sqrt{(\mu_0-\mu_1)^2+2(\sigma^2_1-\sigma^2_0)\log\frac{\phi_0\sigma_1}{\phi_1\sigma_0}}}{\sigma^2_1-\sigma^2_0}.
\end{align}
By repeating the same steps for an inequality rather than equality, we conclude that $0<\partial R(t)/\partial t$ if and only if
\begin{align}
\label{Eq:QEI}
0<-2\log\left[\frac{\phi_0\sigma_1}{\phi_1\sigma_0}\right]-\left(\frac{t-\mu_1}{\sigma_1}\right)^2+\left(\frac{\mu_0-t}{\sigma_0}\right)^2
\end{align}
similarly to Equation \ref{Eq:QE}. This identity holds because the logarithm is a monotonically increasing function, preserving the inequality. Further expanding the right-hand side of Equation \ref{Eq:QEI} and collecting similar terms, we arrive at a quadratic equation in $t$ with the leading (quadratic) coefficient $\sigma_0^{-2}-\sigma_1^{-2}>0$. Hence, the right-hand side defines an upward-branching parabola with zeros given in Equation \ref{Eq:tpm} when they exist (we assume they do owing to assumption in Equation \ref{assumption-1}). The derivative of the statistical risk is positive whenever the right-hand side of Equation \ref{Eq:QEI} is, i.e., on intervals $(-\infty, t_{-})$ and $(t_{+},\infty)$. Hence, the risk $R(t)$ must be increasing on the interval $(-\infty, t_-)$, and so $t_-$ can never be a global minimizer. Likewise, the risk is increasing on the interval $(t_{+},\infty)$, which rules out $\{+\infty\}$. Therefore, we just need to establish that $R(-\infty)\geq R(t_+)$, which is equivalent to Equation \ref{assumption-2} since $t^{*}(\phi_0/\phi_1)=t_+$. 
\end{proof}

\paragraph{Remark.} We have considered unequal variances $\sigma_0^2<\sigma_1^2$ as a natural way to model classes with different difficulty. Yet note that our analysis still holds with slight modifications when $\sigma_0=\sigma_1=\sigma$. The difference in this case is that for any choice of priors, there is exactly one solution to Equation \ref{Eq:QE} (and exactly one intersection point of the scaled density functions), given by
\begin{align*}
t^{*}(\phi_0/\phi_1) =
\frac{2\sigma^2\log\left[\frac{\phi_0}{\phi_1}\right]+(\mu_1^2-\mu_0^2)}{2(\mu_1-\mu_0)}.
\end{align*}
In particular, no additional assumptions as in Equation \ref{assumption-1} to guarantee the existence of an intersection point need to be made. Furthermore, a similar derivative analysis as above implies that the risk is decreasing on the interval $(-\infty, t^{*}(\phi_0/\phi_1))$ and increasing on the interval $(t^{*}(\phi_0/\phi_1),+\infty)$, so that $t^{*}(\phi_0/\phi_1)$ must be the statistical risk minimizer and the assumption in Equation \ref{assumption-2} is, in fact, always satisfied. With these simplifications, the rest of the analysis presented in this section and in Section \ref{Sec:CaseStudy} holds for $\sigma_0=\sigma_1$.

\subsection{Worst-class optimal priors}
\label{App:Gaussians:OptimalPriors}
We wish to formally establish that $t^{*}(\sigma_0/\sigma_1)$ as defined in Equation \ref{eq:that} minimizes both the average and worst-class risks when $\phi_0\propto \sigma_0$ and $\phi_1\propto\sigma_1$. For the first part, as we argued in Section \ref{App:Gaussians:Minimizer}, these priors must satisfy the assumption in Equation \ref{assumption-2} (Equation \ref{assumption-1} is trivially satisfied). In this case, we can equivalently rewrite it as
\begin{align}
\frac{\sigma_0}{\sigma_1}&>\Phi\left(\frac{\mu_0\sigma_1+\mu_1\sigma_0-\mu_1\sigma_0-\mu_1\sigma_1}{\sigma_1(\sigma_0+\sigma_1)}\right)\Bigg/\Phi\left(\frac{\mu_0\sigma_1+\mu_1\sigma_0-\mu_0\sigma_0-\mu_0\sigma_1}{\sigma_1(\sigma_0+\sigma_1)}\right)\nonumber\\
&=\Phi\left(\frac{\mu_0-\mu_1}{\sigma_0+\sigma_1}\right)\Bigg/\Phi\left(\frac{\mu_1-\mu_0}{\sigma_0+\sigma_1}\right)=\left[1-\Phi\left(\frac{\mu_1-\mu_0}{\sigma_0+\sigma_1}\right)\right]\Bigg/\Phi\left(\frac{\mu_1-\mu_0}{\sigma_0+\sigma_1}\right)\nonumber,
\end{align}
Defining $z=\Phi\left(\frac{\mu_1-\mu_0}{\sigma_0+\sigma_1}\right)$, we arrive at $\sigma_0z>\sigma_1(1-z)$. By rearranging and collecting similar terms, we get $z>\frac{\sigma_1}{\sigma_0+\sigma_1}$, which is equivalent to 
\begin{align}
\label{assumption-2-optimal}
\mu_1-\mu_0>(\sigma_0+\sigma_1)\Phi^{-1}\left(\frac{\sigma_1}{\sigma_0+\sigma_1}\right)
\end{align}
since $\Phi^{-1}$ is monotonically increasing. Hence, the assumption in Equation \ref{assumption-2} can be interpreted as a lower bound on the separation between the two means $\mu_0$ and $\mu_1$. When this condition holds, $t^{*}(\sigma_0/\sigma_1)$ minimizes the average statistical risk, as desired.

For the second part, we start by proving the following lemma.
\begin{lemma}
\label{Lemma:1}
Suppose $f:\mathbb{R}\rightarrow[0,1]$ is a strictly increasing continuous function and $g:\mathbb{R}\rightarrow[0,1]$ is a strictly decreasing continuous function, satisfying
\begin{align}
\begin{cases}
\lim_{x\rightarrow -\infty}f(x)=\lim_{x\rightarrow +\infty}g(x)=0\\
\lim_{x\rightarrow +\infty}f(x)=\lim_{x\rightarrow -\infty}g(x)=1
\end{cases}
\end{align}
The solution $x^{*}$ to $\min_x\max\{f(x),g(x)\}$ is unique and satisfies $f(x^{*})= g(x^{*})$.
\end{lemma}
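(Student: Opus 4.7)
My plan is to reduce the min-max problem to analyzing a single auxiliary function and then use the intermediate value theorem (IVT) to locate the desired point. Specifically, I would define $h(x) = f(x) - g(x)$, which is strictly increasing and continuous as the difference of a strictly increasing continuous function and a strictly decreasing continuous function. The boundary limits give $\lim_{x\to-\infty}h(x) = 0 - 1 = -1$ and $\lim_{x\to+\infty}h(x) = 1 - 0 = 1$, so IVT guarantees at least one root, and strict monotonicity guarantees that this root $x^{*}$ is unique. By construction, $f(x^{*}) = g(x^{*})$, which establishes the equality claim of the lemma.

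Next, I would show that this same $x^{*}$ minimizes $M(x) := \max\{f(x), g(x)\}$ and that no other point does. The strict monotonicity of $h$ tells us that $f(x) < g(x)$ for $x < x^{*}$ and $f(x) > g(x)$ for $x > x^{*}$. Thus for $x < x^{*}$ we have $M(x) = g(x) > g(x^{*}) = M(x^{*})$, where the strict inequality comes from $g$ being strictly decreasing. Symmetrically, for $x > x^{*}$ we have $M(x) = f(x) > f(x^{*}) = M(x^{*})$ since $f$ is strictly increasing. Combining both cases, $M(x) > M(x^{*})$ for every $x \neq x^{*}$, which simultaneously proves that $x^{*}$ is a minimizer of $M$ and that it is the unique such minimizer.

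There is no real obstacle here beyond careful case analysis; the only thing to watch is that the strict inequalities in the monotonicity of $f$ and $g$ are indispensable to rule out a flat region of minimizers, so I would highlight that the hypothesis of \emph{strict} monotonicity (as opposed to mere monotonicity) is exactly what buys uniqueness. A final sentence would note that this lemma applies directly to the worst-class risk setup in the paper via $f(t) = R_1(t) = \Phi((t-\mu_1)/\sigma_1)$ and $g(t) = R_0(t) = \Phi((\mu_0-t)/\sigma_0)$, which satisfy all the stated monotonicity and limit conditions, thereby justifying the claim in Appendix~\ref{App:Gaussians:OptimalPriors} that the worst-class minimizer $\hat{t}$ is characterized by $R_0(\hat{t}) = R_1(\hat{t})$.
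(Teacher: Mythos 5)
Your proof is correct and follows essentially the same route as the paper: both define $h(x)=f(x)-g(x)$, invoke continuity, strict monotonicity, and the limits $\pm 1$ to obtain a unique root $x^{*}$ via the intermediate value theorem, and then argue that $\max\{f,g\}$ equals $g$ (decreasing) to the left of $x^{*}$ and $f$ (increasing) to the right. If anything, your write-up is slightly more explicit than the paper's in spelling out the strict inequalities that rule out any other minimizer.
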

\begin{proof}
Define $h(x)=f(x)-g(x)$. Observe that $h(x)$ is a strictly increasing continuous function as $f$ and $-g$ are strictly increasing.  Also, $\lim_{x\rightarrow -\infty} h(x) =-1$ and $\lim_{x\rightarrow \infty} h(x) =1$. From the Intermediate Value Theorem, there exists a point $x^{*}$, where $h(x^{*})=0$, i.e., $f(x^{*}) = g(x^{*})$. Observe that $x<x^{*}$ implies $\max_x\{ f(x), g(x)\}=g(x)$ and $x\geq x^{*}$ implies $\max_x\{ f(x), g(x)\}=f(x)$. Therefore, the objective function decreases up to $x^{*}$ and then increases. Hence, $x^{*}$ is the minimizer of the objective $\min_x\max\{f(x),g(x)\}$.
\end{proof}
Since $f(t)=R_1(t)$ and $g(t)=R_0(t)$ meet the conditions of Lemma \ref{Lemma:1}, the minimizer of the worst-class risk $\hat{t}$ satisfies $R_0(\hat{t})=R_1(\hat{t})$. Equating the risks in Equation \ref{Eq:Risks} then immediately proves the formula for $\hat{t}$ (Equation \ref{eq:that}). Finally, note that $t^{*}(\sigma_0/\sigma_1)=\hat{t}$ because of the vanishing logarithm. Therefore, $t^{*}(\sigma_0/\sigma_1)$ minimizes both the average and worst-class statistical risks for a mixture with priors $\phi_0=\frac{\sigma_0}{\sigma_0+\sigma_1}$ and $\phi_1=\frac{\sigma_1}{\sigma_0+\sigma_1}$ provided that Equation \ref{assumption-2-optimal} (reformulated assumption in Equation \ref{assumption-2} for the given choice of priors) holds.

Finally, note that if $\sigma_0=\sigma_1$, $\hat{t}=t^{*}(\phi_0/\phi_1)=(\mu_0+\mu_1)/2$ for the optimal priors $\phi_0=\phi_1$, as expected.

\subsection{The Effect of SSP}
\label{App:Gaussians:SSP}

Our setting allows us to adapt and  analyze a state-of-the-art baseline pruning algorithm, {\em Self-Supervised Pruning (SSP)} \citep{morcos}. Recall that, in Section \ref{Sec:CaseStudy}, we adopted a variant of SSP that removes samples within a margin $M>0$ of their class means. SSP performs k-means clustering in the embedding space of an ImageNet pre-trained self-supervised model and defines the difficulty of each data point by the cosine distance to its nearest cluster centroid, or prototype. In the case of two univariate Gaussians, this score corresponds to measuring the distance to the closest mean.

We claimed and demonstrated in Figure \ref{Fig:Theory} that the optimal risk minimizer remains around $t^{*}(\phi_0/\phi_1)$ even after pruning. In this section, we make these claims more precise. To this end, note that, for a Gaussian variable with variance $\sigma^2$, the removed probability mass is 
\begin{align}
\label{eq:pruned}
\Phi\left(\frac{M+\mu-\mu}{\sigma}\right)-\Phi\left(\frac{\mu-(\mu-M)}{\sigma}\right)=2\Phi\left(\frac{M}{\sigma}
\right)-1.
\end{align}
Now, for sufficiently small $M$, we can assume that the average risk minimizer lies within the interval $(\mu_0+M,\mu_1-M)$ (recall that the average risk minimizer of the original mixture lies between $\mu_0$ and $\mu_1$ owing to the assumption in Equation \ref{assumption-2}). In this case, the right tail of the easier class and the left tail of the more difficult class (these tails are misclassified for the two classes) are unaffected by pruning and, hence, the average risk after SSP $R'(t)$ should remain proportional to the original risk $R(t)$. More formally, consider the class-wise risks after SSP:
\begin{align}
R'_0(t)&=\frac{\Phi\left(\frac{\mu_0-t}{\sigma_0}\right)}{2-2\Phi\left(M/\sigma_0\right)}=\frac{R_0(t)}{2-2\Phi\left(M/\sigma_0\right)},\nonumber\\
R'_1(t)&=\frac{\Phi\left(\frac{t-\mu_1}{\sigma_1}\right)}{2-2\Phi\left(M/\sigma_1\right)}=\frac{R_1(t)}{2-2\Phi\left(M/\sigma_1\right)}.\nonumber
\end{align}
where the denominators are the normalizing factors based on Equation \ref{eq:pruned}. To compute the average risk $R'(t)=\phi_0'R'_0(t)+\phi_1'R'_1(t)$, we shall identify the modified class priors $\phi'_0$ and $\phi'_1$ after pruning. Again, from Equation \ref{eq:pruned}, we obtain
\begin{align}
\phi_0'&\equiv\phi_0\left[2-2\Phi\left(\frac{M}{\sigma_0}\right)\right],\qquad \phi_1'\equiv\phi_1\left[2-2\Phi\left(\frac{M}{\sigma_1}\right)\right]
\end{align}
up to a global normalizing constant that ensures $\phi'_0+\phi'_1=1$. Therefore, the average risk is indeed proportional to $\phi_0R_0(t)+\phi_1R_1(t)=R(t)$, as desired, so the average risk minimizer after pruning coincides with the original one.

\subsection{Multivariate Isotropic Gaussians}

While the analysis of general multivariate Gaussians is beyond the scope of this paper, we show here that in the case of two {\em isotropic} multivariate Gaussians (i.e., $\text{Var}(x|y)=\sigma^2_yI$ for $y=0,1$), the arguments in Sections \ref{App:Gaussians:Minimizer}--\ref{App:Gaussians:SSP} apply. In particular, we demonstrate that this scenario reduces to a univariate case.

For simplicity, assume that the means of the two Gaussians are located at $-\mu$ and $\mu$ for some $\mu\in\mathbb{R}^d$ for classes $y=0$ and $y=1$, respectively, which can always be achieved by a distance-preserving transformation (translation and/or rotation). Generalizing the linear classifier from Section \ref{Sec:CaseStudy}, we now consider a hyperplane defined by a vector $w \in \mathbb{R}^d$ of unit $\ell_2$-norm and a threshold $t\in\mathbb{R}\cup\{\pm\infty\}$, classifying a point $x\in\mathbb{R}^d$ as $\mathbf{1}\{w^{\top}x \leq t\}$. Note that for a fixed vector $w\in\mathbb{R}^d$ and $x\sim\mathcal{N}(\mu,\sigma^2I_d)$, we have $(w^{\top}x-w^{\top}\mu)/\sigma\sim\mathcal{N}(0,1)$ as a linear transformation of the isotropic Gaussian. We can now compute the class risks as 
\begin{align*}
R_1(w,t) &=P(w^{\top}x \leq t | y=1)\nonumber\\ 
&= P\left(\frac{w^{\top}x - w^{\top}\mu}{\sigma_{1}} \leq \frac{t-w^{\top}\mu}{\sigma_1} \bigg| y=1\right)
=\Phi\left(\frac{t-w^{\top}\mu}{\sigma_1}\right),\\
R_0(w,t)&=P(w^{\top}x \geq t | y=0)\\
&= P\left(\frac{w^{\top}x + w^{\top}\mu}{\sigma_{0}} \geq \frac{t+w^{\top}\mu}{\sigma_{0}} \bigg| y=1\right)
=\Phi\left(\frac{-t-w^{\top}\mu}{\sigma_{0}}\right).
\end{align*}
The average risk given the class priors is thus 
\begin{align*}
R(w,t)=\phi_1 \Phi\left(\frac{t-w^{\top}\mu}{\sigma_1}\right) + \phi_0  \Phi\left(\frac{-t-w^{\top}\mu}{\sigma_{0}}\right).
\end{align*}

We will now show that for all the risk minimization problems we have considered in our theoretical analysis, we can equivalently minimize over a one-dimensional problem. It suffices to observe that for a fixed $t\in\mathbb{R}$, both $R_0(w,t)$ and $R_1(w,t)$ are minimized when $w$ (with $\|w\|_2=1$) coincides with $\mu/\|\mu\|$ because of monotonicity of $\Phi$. This leads to the expressions in Equation \ref{Eq:Risks} with $\mu_{1,0}=\pm \mu/\|\mu\|$. Thus, to compute the minimum of $R(w,t)$ with respect to  $w$ and $t$, we can equivalently minimize $R(\mu/\|\mu\|_2,t)$ with respect to $t$. For the worst-class optimization 
\begin{align*}
\min_{\|w\|=1,t} \max\{R_{1}(w,t), R_{0}(w,t)\}
\end{align*}
observe that 
\begin{align*}\min_{t} \min_{\|w\|=1} \max\{R_{1}(w,t), R_{0}(w,t)\}=\min_{t}  \max\{R_{1}(\mu/\|\mu\|,t), R_{0}(\mu/\|\mu\|,t)\}
\end{align*}
Therefore, this problem also reduces to the minimization in the univariate case from Section \ref{App:Gaussians:OptimalPriors}.

\section{Data Pruning: Full Results}
\label{App:Full-Scatters}%
\begin{figure}[H]
\centering
\includegraphics[width=0.98\textwidth]{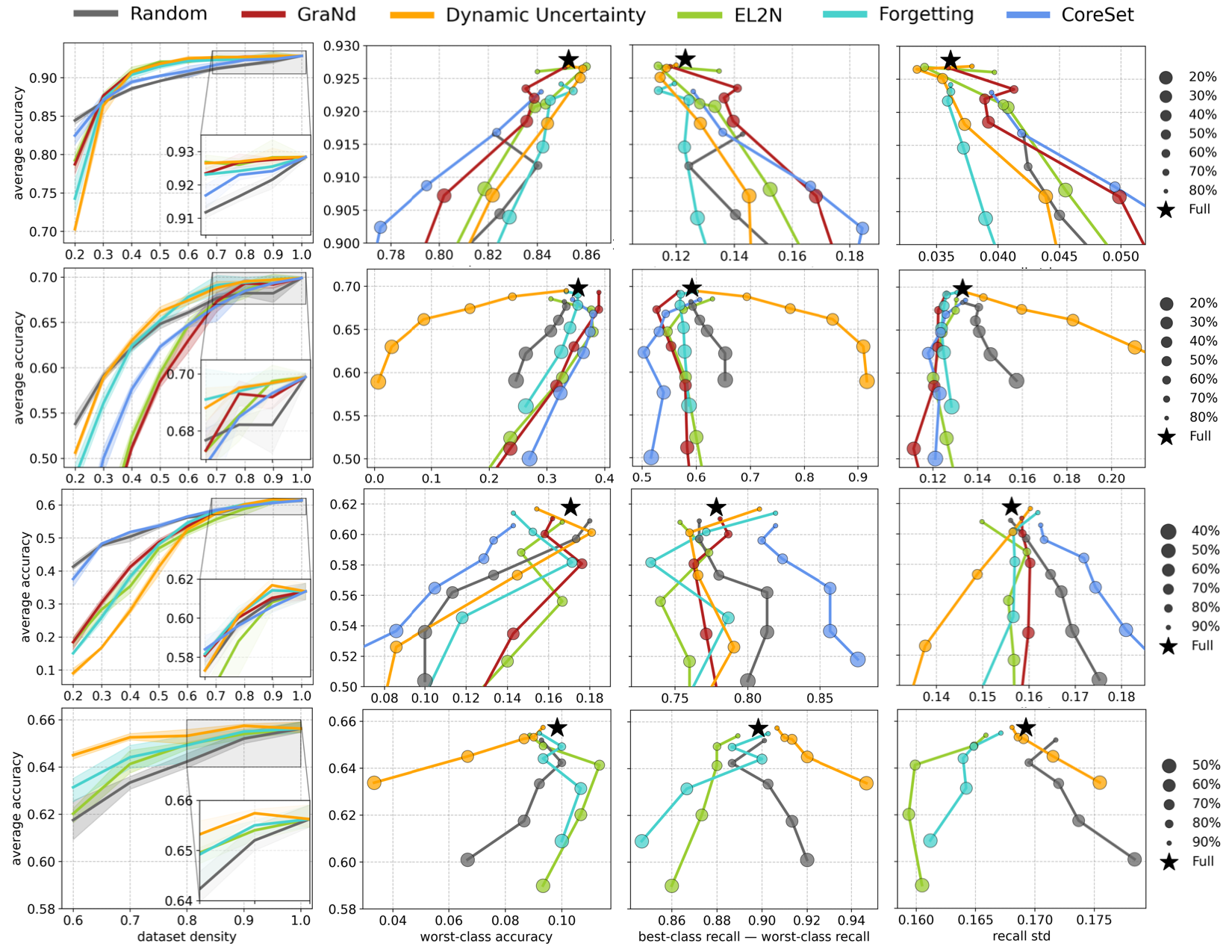}%
\captionsetup{font=small}
\caption{The average test performance of various data pruning algorithms against dataset density (fraction of samples remaining after pruning) and metrics of class robustness. \textbf{Top to Bottom}: VGG-16 on CIFAR-10; VGG-19 on CIFAR-100; ResNet-18 on TinyImageNet; ResNet-50 on ImageNet. All results averaged over $3$ random seeds. Error bands represent min/max.}
\label{Fig:Scatters-1-full}
\end{figure}

\section{DRoP: Full Results}
\label{App:Full-MetriQ}
\begin{figure}[H]
\vspace{-10px}
\centering
\subfloat[\centering VGG-16 on CIFAR-10] {{\includegraphics[width=\textwidth]{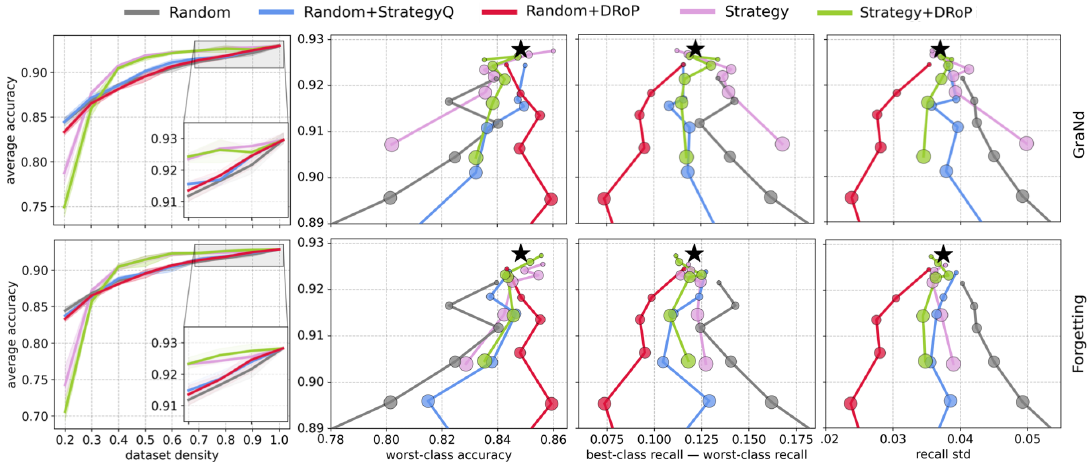}}}\\%
\subfloat[\centering VGG-19 on CIFAR-100]{{\includegraphics[width=\textwidth]{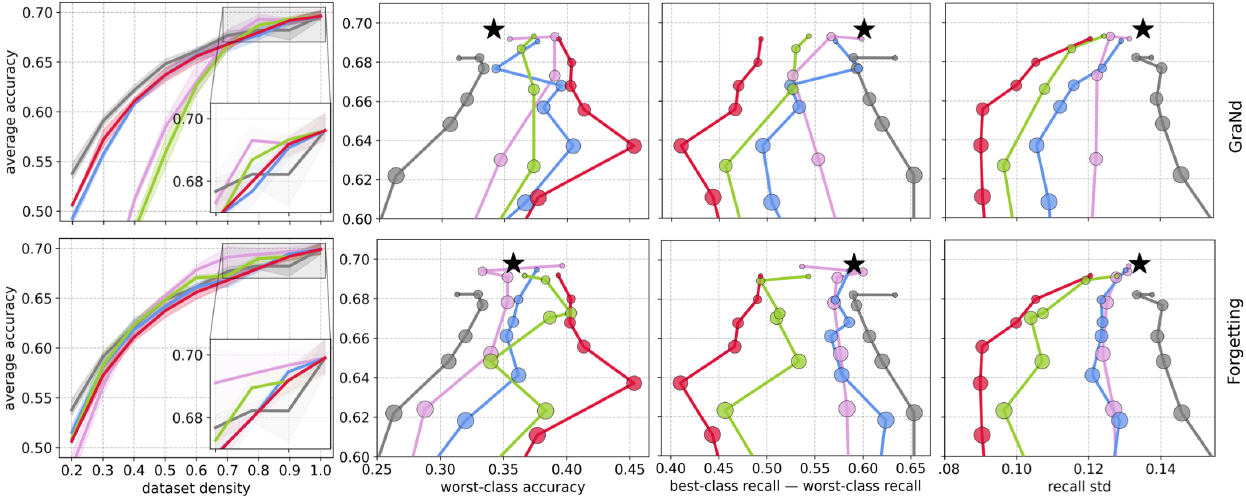}}}\\%
\subfloat[\centering ResNet-18 on TinyImageNet]{{\includegraphics[width=\textwidth]{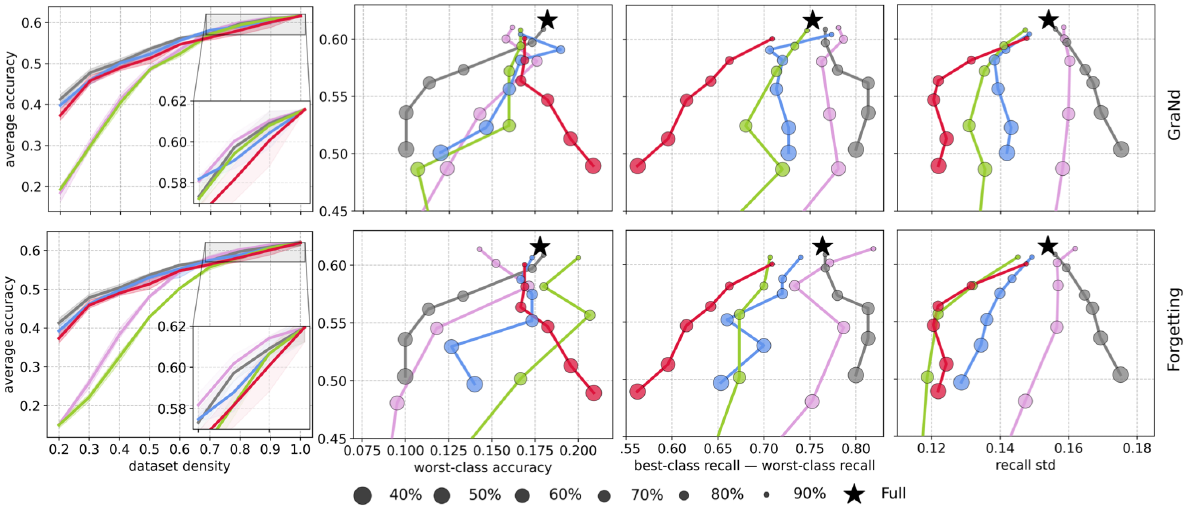}}}%
 \captionsetup{font=small}
\caption{The average test performance of various data pruning protocols against dataset density and measures of class robustness. Random+DroP consistently outperforms all baselines with respect to these additional measures of distributional robustness besides worst-class accuracy reported in Figure \ref{Fig:Results}, too, confirming its effectiveness in reducing bias. All results averaged over $3$ random seeds. Error bands represent min/max.}%
\label{Fig:Results-full}
\end{figure}

\begin{figure}[t]
\vspace{-10px}
\centering
\includegraphics[width=\textwidth]{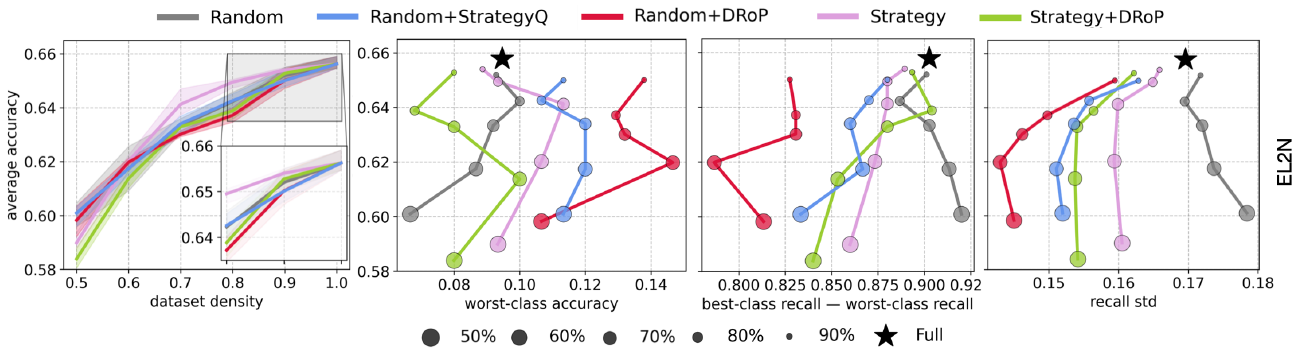}%
\captionsetup{font=small}
\caption{The average test performance of various data pruning protocols against dataset density and measures of class robustness. ResNet-50 on ImageNet with EL2N as Strategy. All results averaged over $3$ random seeds. Error bands represent min/max.}%
\label{Fig:Results-full-ImageNet}
\end{figure}

\begin{table*}[t]
\caption{Average and worst-class accuracy (in $\%$) when pruning datasets to $d=0.3,0.5,0.8$ by various methods with or without DRoP. For each column, we boldface the best worst-class accuracy across all three methods. In all cases, DRoP achieves superior worst-class accuracy, especially so when used together with Random. Furthermore, Random+DRoP always improves robustness of full datasets when halving them ($d=0.5$).}
\centering
\small
\begin{tabular}{lccccccccccc}
\toprule
\multirow{2}{*}{Method} & \multirow{2}{*}{DRoP} & \multirow{2}{*}{Acc.} & \multicolumn{3}{c}{CIFAR-10} & \multicolumn{3}{c}{CIFAR-100} & \multicolumn{3}{c}{TinyImageNet} \\
\cmidrule{4-12}
& & & $0.3$ & $0.5$ & $0.8$ & $0.3$ & $0.5$ & $0.8$ & $0.3$ & $0.5$ & $0.8$\\
\cmidrule{1-12}
\multirow{4}{*}{Random} &
\multirow{2}{*}{\ding{55}} &
avg & $86.8$ & $89.6$ & $91.7$ & $59.1$ & $64.8$ & $68.2$ & $47.8$ & $53.6$ & $59.7$\\ & &
\cellcolor{lightgray}worst & \cellcolor{lightgray}$75.5$ & \cellcolor{lightgray}$80.1$ & \cellcolor{lightgray}$82.2$ & \cellcolor{lightgray}$24.7$ & 
\cellcolor{lightgray}$30.1$ & \cellcolor{lightgray}$33.0$ & 
\cellcolor{lightgray}$8.7$ & \cellcolor{lightgray}$10.0$ & \cellcolor{lightgray}$17.3$\\ &
\multirow{2}{*}{\ding{51}} &
avg & $86.5$ & $89.5$ & $91.8$ & $57.2$ & 
$63.7$ & $68.0$ & $45.8$ & $51.3$ & $58.1$\\
& &
\cellcolor{lightgray}worst & \cellcolor{lightgray}$\mathbf{82.1}$ & \cellcolor{lightgray}$\mathbf{85.9}$ & \cellcolor{lightgray}$84.9$ & \cellcolor{lightgray}$\mathbf{34.3}$ & 
\cellcolor{lightgray}$\mathbf{45.3}$ & \cellcolor{lightgray}$\mathbf{40.3}$ & \cellcolor{lightgray}$\mathbf{16.2}$ & \cellcolor{lightgray}$\mathbf{20.0}$ & \cellcolor{lightgray}$16.9$\\
\cmidrule{1-12}
\multirow{4}{*}{GraNd} &
\multirow{2}{*}{\ding{55}} &
avg & $87.6$ & $91.9$ & $92.7$ & $40.4$ & 
$58.5$ & $69.3$ & $30.4$ & $48.7$ & $60.0$\\ & &
\cellcolor{lightgray}worst & \cellcolor{lightgray}$77.0$ & \cellcolor{lightgray}$83.6$ & \cellcolor{lightgray}$85.1$ & \cellcolor{lightgray}$12.3$ & 
\cellcolor{lightgray}$31.7$ & \cellcolor{lightgray}$39.0$ & \cellcolor{lightgray}$1.0$ & \cellcolor{lightgray}$12.4$ & \cellcolor{lightgray}$15.8$\\ &
\multirow{2}{*}{\ding{51}} &
avg & $85.9$ & $91.6$ & $92.6$ & $35.5$ & 
$55.9$ & $68.7$ & $29.9$ & $48.6$ & $59.4$\\
& &
\cellcolor{lightgray}worst & \cellcolor{lightgray}$78.0$ & \cellcolor{lightgray}$83.8$ & \cellcolor{lightgray}$84.7$ & \cellcolor{lightgray}$15.0$ & 
\cellcolor{lightgray}$30.1$ & \cellcolor{lightgray}$36.3$ & \cellcolor{lightgray}$4.7$ & \cellcolor{lightgray}$10.7$ & \cellcolor{lightgray}$16.7$\\
\cmidrule{1-12}
\multirow{4}{*}{Forgetting} &
\multirow{2}{*}{\ding{55}} &
avg & $87.2$ & $91.5$ & $92.4$ & $56.1$ & 
$65.2$ & $69.4$ & $25.9$ & $48.1$ & $60.2$\\ & &
\cellcolor{lightgray}worst & \cellcolor{lightgray}$79.1$ & \cellcolor{lightgray}$84.2$ & \cellcolor{lightgray}$85.0$ & \cellcolor{lightgray}$26.3$ & 
\cellcolor{lightgray}$34.0$ & \cellcolor{lightgray}$33.3$ & \cellcolor{lightgray}$1.0$ & \cellcolor{lightgray}$9.5$ & \cellcolor{lightgray}$15.2$\\ &
\multirow{2}{*}{\ding{51}} &
avg & $85.7$ & $91.4$ & $92.6$ & $58.2$ & 
$64.8$ & $68.9$ & $22.1$ & $42.8$ & $58.2$\\
& &
\cellcolor{lightgray}worst & \cellcolor{lightgray}$78.7$ & \cellcolor{lightgray}$84.6$ & \cellcolor{lightgray}$\mathbf{85.1}$ & \cellcolor{lightgray}$32.0$ & 
\cellcolor{lightgray}$34.0$ & \cellcolor{lightgray}$38.3$ & \cellcolor{lightgray}$1.3$ & \cellcolor{lightgray}$12.7$ & \cellcolor{lightgray}$\mathbf{18.0}$\\
\midrule
\midrule
\multirow{4}{*}{Average} &
\multirow{2}{*}{\ding{55}} &
avg & $87.2$ & $91.0$ & $92.3$ & $51.9$ & 
$62.8$ & $69.0$ & $34.7$ & $50.1$ & $60.0$\\ & &
\cellcolor{lightgray}worst & \cellcolor{lightgray}$77.2$ & \cellcolor{lightgray}$82.6$ & \cellcolor{lightgray}$84.1$ & \cellcolor{lightgray}$21.1$ & 
\cellcolor{lightgray}$31.9$ & \cellcolor{lightgray}$35.1$ & \cellcolor{lightgray}$3.6$ & \cellcolor{lightgray}$10.6$ & \cellcolor{lightgray}$16.1$\\ &
\multirow{2}{*}{\ding{51}} &
avg & $86.0$ & $90.8$ & $92.3$ & $50.3$ & 
$61.5$ & $68.5$ & $32.6$ & $47.6$ & $58.6$\\
& &
\cellcolor{lightgray}worst & \cellcolor{lightgray}$79.6$ & \cellcolor{lightgray}$84.8$ & \cellcolor{lightgray}$84.9$ & \cellcolor{lightgray}$27.1$ & 
\cellcolor{lightgray}$36.5$ & \cellcolor{lightgray}$38.3$ & \cellcolor{lightgray}$7.4$ & \cellcolor{lightgray}$14.5$ & \cellcolor{lightgray}$17.2$\\
\midrule
\midrule
\multicolumn{2}{c}{\multirow{2}{*}{Full Dataset}} &
avg & \multicolumn{3}{c}{$92.8$} & \multicolumn{3}{c}{$70.0$} & \multicolumn{3}{c}{$61.6$} \\
& &
\cellcolor{lightgray}worst & \multicolumn{3}{c}{\cellcolor{lightgray}$84.8$} & \multicolumn{3}{c}{\cellcolor{lightgray}$35.8$} & \multicolumn{3}{c}{\cellcolor{lightgray}$17.8$} \\
\bottomrule
\end{tabular}
\end{table*}

\begin{figure}[H]
\centering
\includegraphics[width=\textwidth]{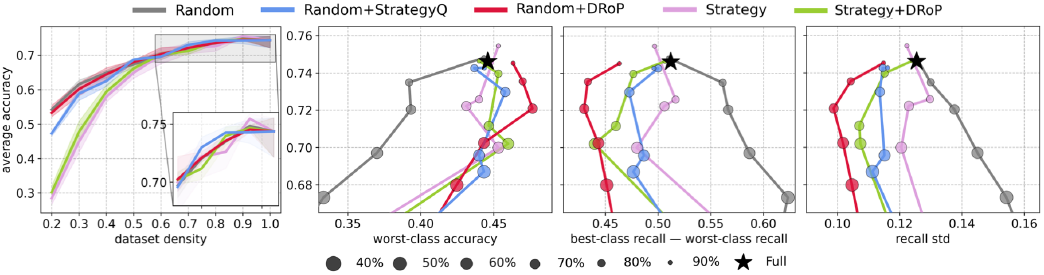}%
\captionsetup{font=small}
\caption{The average test performance of various data pruning protocols against dataset density and measures of class robustness. Wide-ResNet-101 on CIFAR-100 with EL2N as Strategy. All results averaged over $3$ random seeds. Error bands represent min/max.}%
\label{Fig:WRN101}
\end{figure}

\paragraph{Larger Models.} Some previous studies argue that larger models enjoy better distributional robustness compared to smaller ones \citep{size1, size2}. To see how model size could affect the applicability and effectiveness of DRoP, we train a Wide-ResNet-101 \citep{wrn} with expansion factor of $2$ on CIFAR-100, which is an unusually high caliber architecture for such a relatively small dataset. In fact, we had to remove a few early downsampling layers from the original implementation of this model to account for the resolution of CIFAR-100. On full dataset, Wide-ResNet-101 with 125M parameters achieves a $4\%$ higher average and almost a $10\%$ higher worst-class accuracy compared to a VGG-19 with just 20M parameters (see Figures \ref{Fig:Results-full}b and \ref{Fig:WRN101}). Even in this more competitive setup, Random+DRoP continues to show considerable improvement in all robustness metrics over other pruning baselines as well as the full dataset (Figure \ref{Fig:WRN101}).

\section{DRoP Ablation: Pre-Training Length}
\label{App:Ablation}
As mentioned in Section \ref{Sec:Results}, DRoP requires a pre-trained query model to compute class-wise validation errors. All existing data pruning methods also require pre-training to compute sample-wise pruning scores. However, as \cite{el2n} shows, it suffices to run optimization for as little as $10$ epochs to achieve accurate enough scores for EL2N. In our experimental work, we used $10\%$ of the full training cycle to train the query model for EL2N, GraNd, and DRoP, which is at least the recommended $10$ epochs for all model-dataset pairs tested in this study. In this section, we verify that DRoP reaches its full performance with this much pre-training and, hence, it does not introduce any additional computational overhead already needed by the current data pruning algorithms. Indeed, Figure \ref{Fig:DRoP-ablation} shows that the downstream performance of DRoP saturates at $10$ pre-training epochs or earlier, with lower pruning ratios allowing even less training than more aggressive ones.

\begin{figure}[H]
\centering
\includegraphics[width=0.9\textwidth]{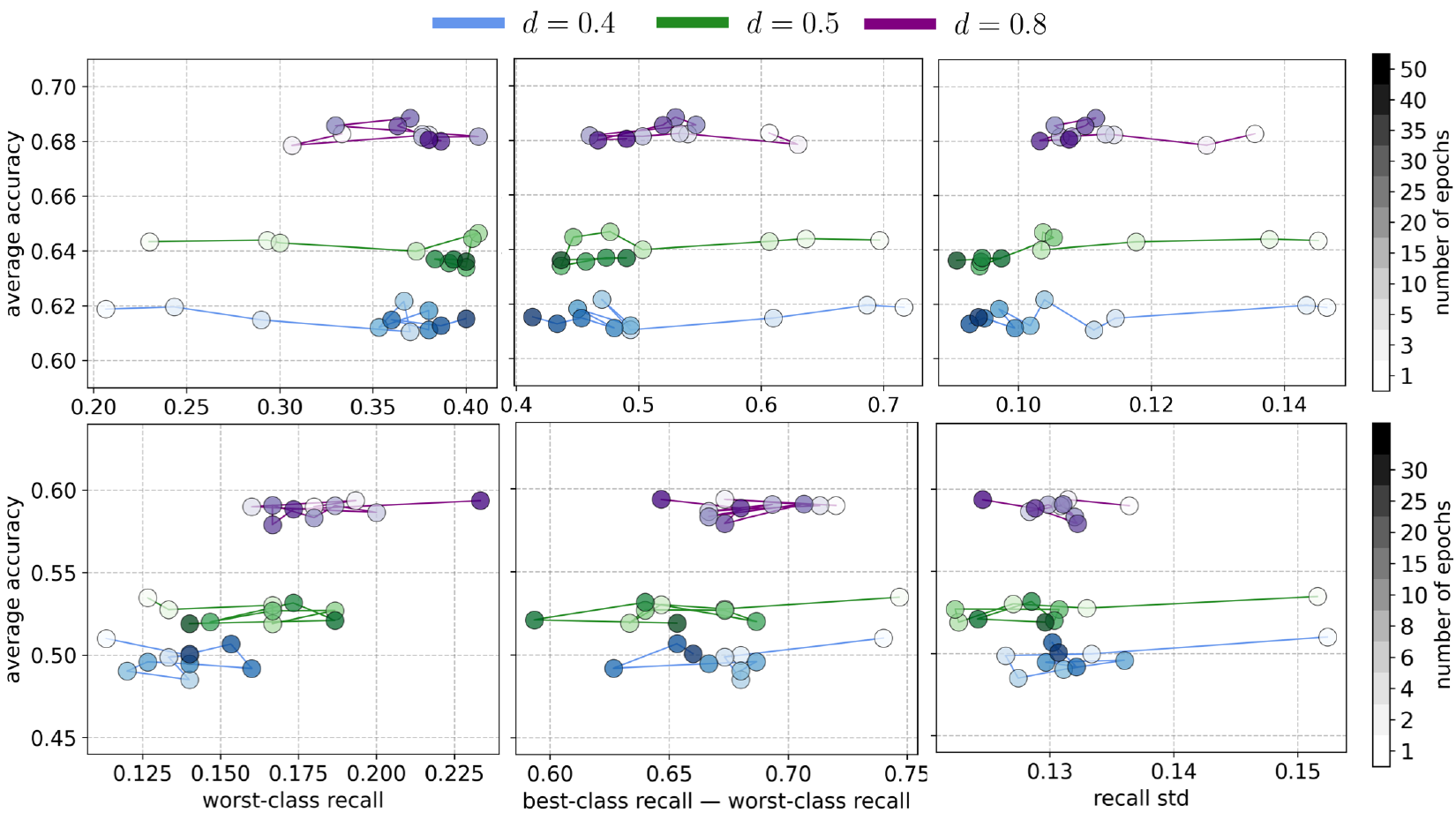}
\caption{Final model performance after DRoP with different lengths of the query model training. \textbf{Top:} VGG-19 on CIFAR-100, \textbf{Bottom:} ResNet-18 on TinyImageNet. Repeated over $3$ random seeds and three dataset densities ($d=0.4, 0.5, 0.8$). In our study, we use $16$ and $9$ pre-training epochs for VGG-19 on CIFAR-100 and ResNet-18 on TinyImageNet (10\% of the full training cycle), respectively.}
\label{Fig:DRoP-ablation}
\end{figure}

\section{ResNet-50 on Waterbirds}
\label{App:Waterbirds}

As stated in the caption of Figure \ref{Fig:Waterbirds}, \citet{sagawa} compute average accuracy on Waterbirds as sum of group accuracy weighted by the group proportions in the training dataset, which is highly skewed towards majority classes (landbird+forest and waterbird+water, see Figure \ref{Fig:AppWaterbirds}b). We adhere to this strategy in Figure \ref{Fig:Waterbirds} to ensure fair comparison with prior algorithms. Note, however, that this places DRoP at a disadvantage since it is designed to prune easier groups more aggressively and protect underperforming minority groups (see left barplots for each method in Figure \ref{Fig:AppWaterbirds}b). Ultimately, this leads to a rapid degradation of the average accuracy at low dataset densities, which we observed in Figure \ref{Fig:Waterbirds} (left). In Figure \ref{Fig:AppWaterbirds}a (bottom) we switch to a more standard accuracy computation as fraction of correctly classified samples regardless of group identities. In this framework, Random+DRoP remains strong relative to other methods. We should note that the test split is also slightly imbalanced with landbird+forest and landbird+water represented by $2255$ images each, and waterbird+forest and waterbird+water---by $642$.

\begin{figure}[H]
\centering
\parbox{\textwidth}{%
    \centering
    \includegraphics[width=0.9\textwidth]{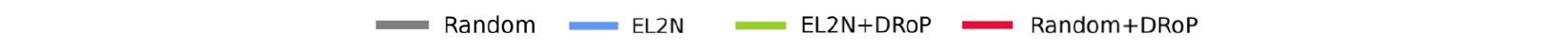}
\vspace{-13px}}
\subfloat[]{{\includegraphics[width=0.276\textwidth]{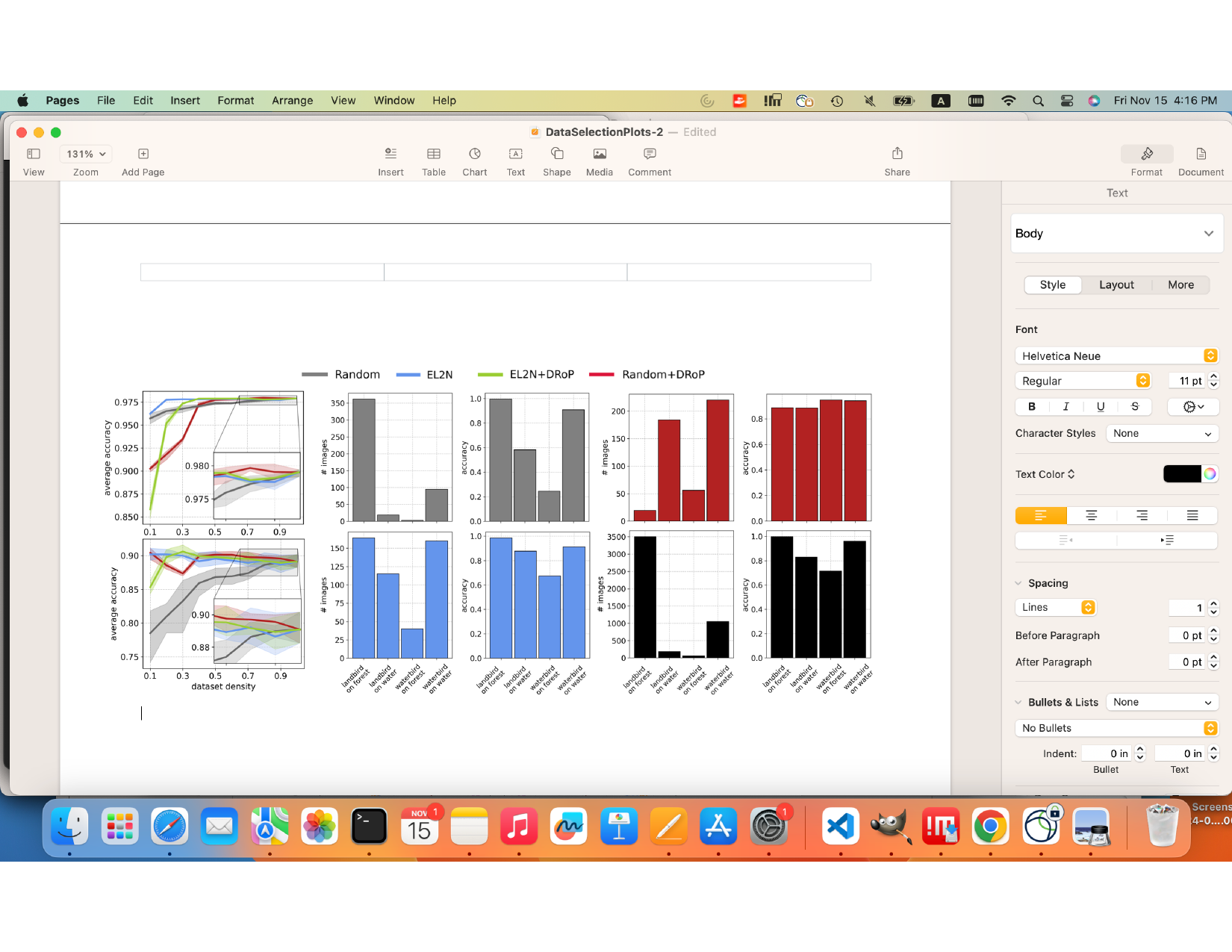}}}%
\hspace{0.5em}
\subfloat[]{{\includegraphics[width=0.7\textwidth]{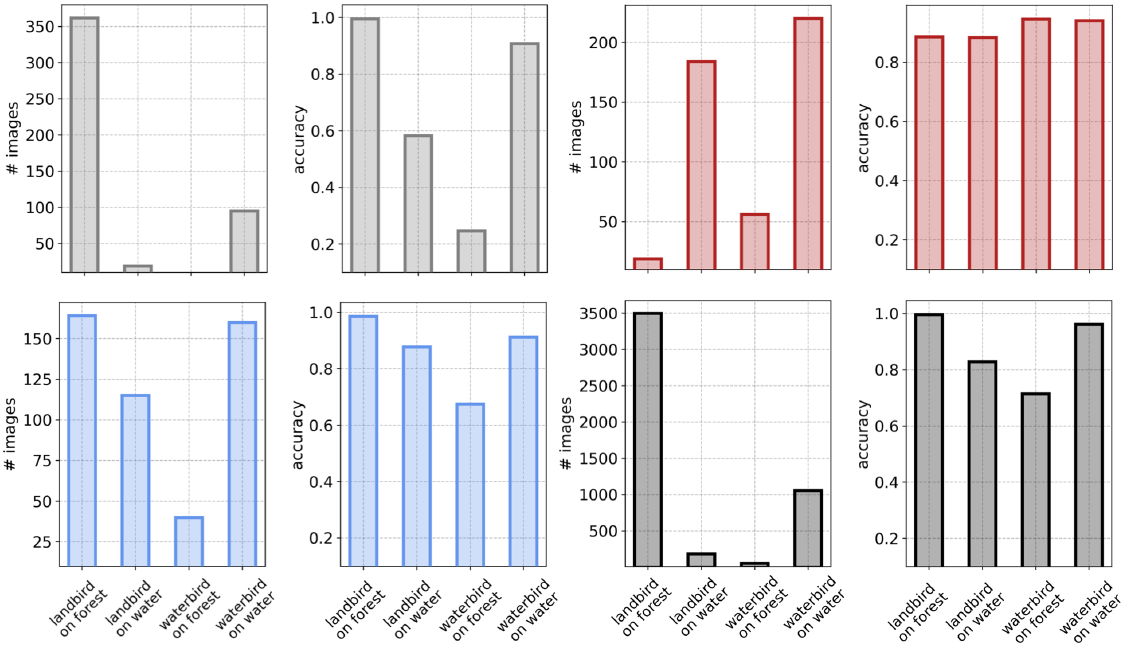}}}%
\captionsetup{font=small}
\caption{\textbf{(a)}: Average test accuracy against dataset density across different pruning methods. \textbf{Top:} Average accuracy computed as a weighted sum of group-wise accuracies according to training proportions. \textbf{Bottom:} Average accuracy is computed as a fraction of correctly classified samples of the test set. \textbf{(b)}: Group sizes and group accuracy for all four groups in Waterbirds at $d=0.1$ for different pruning methods, as well as on the full dataset (bottom right barplots).}%
\label{Fig:AppWaterbirds}
\end{figure}

\section{Data Pruning: Class-Wise Analysis}
\label{App:Additional}

\begin{figure}[H]
\centering
\includegraphics[width=\textwidth]{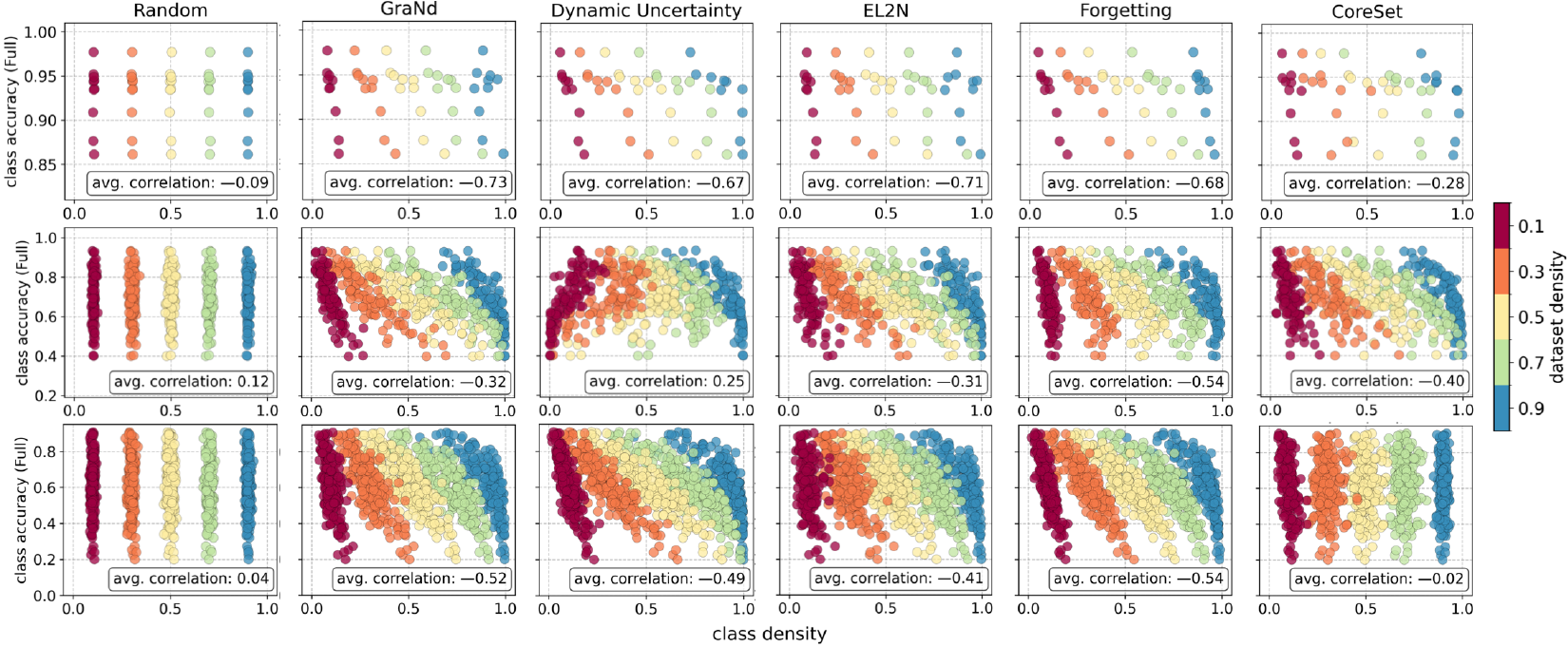}
\caption{Full dataset test accuracy against density, across all classes, after pruning with different methods. We use the full dataset accuracy to capture the baseline ``difficulty'' of each class. On each plot, we report the correlation coefficient between these two quantities across classes, averaged over $5$ data density levels ($0.1$, $0.3$, $0.5$, $0.7$, $0.9$). \textbf{Top:} VGG-16 on CIFAR-10, \textbf{Center:} VGG-19 on CIFAR-100, \textbf{Bottom:} ResNet-18 on TinyImageNet. Experiments repeated over $3$ random seeds.}
\label{Fig:Correlation-1}
\end{figure}

\begin{figure}[H]
\centering
\includegraphics[width=\textwidth]{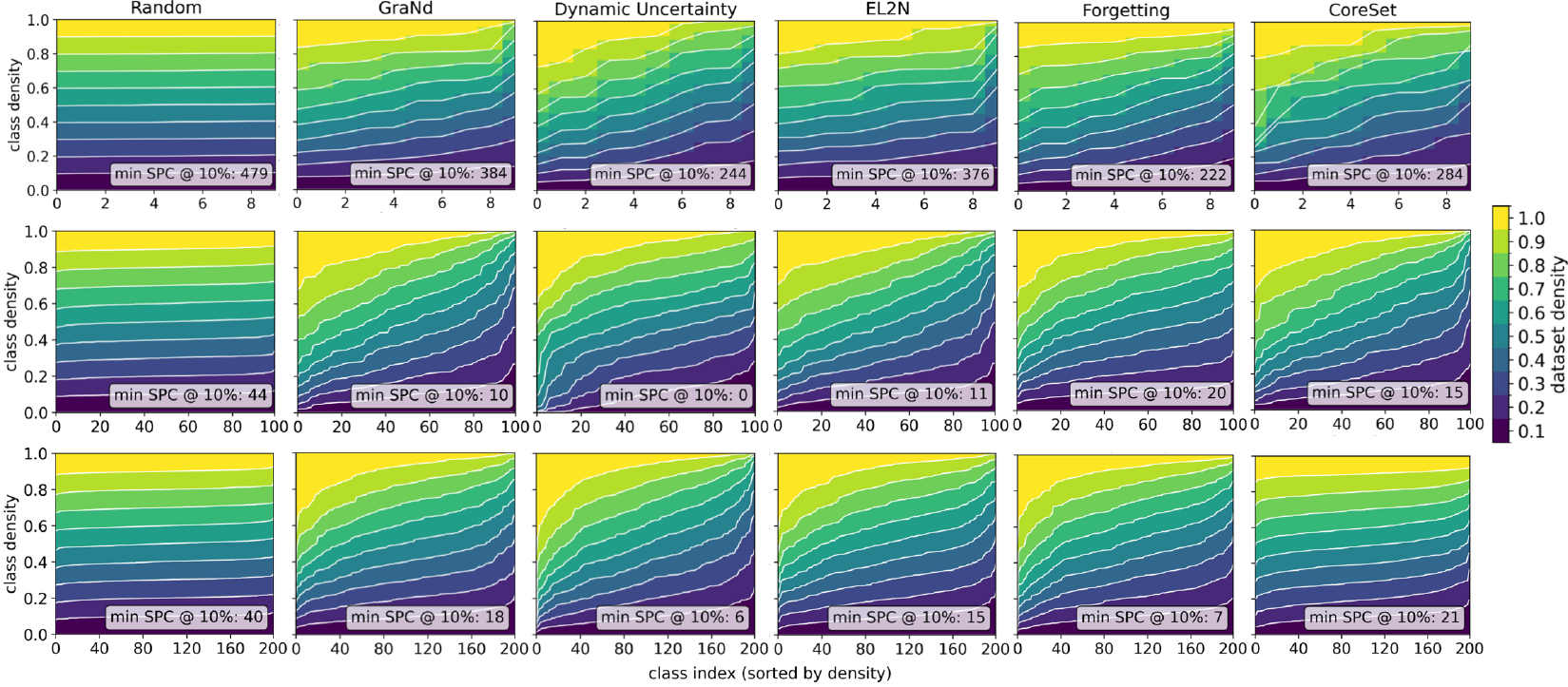}
\caption{Sorted class densities in the training dataset  pruned by various algorithms to different density levels. On each plot, we report the minimum number of samples per class (SPC) at $10\%$ dataset density. \textbf{Top:} VGG-16 on CIFAR-10, \textbf{Center:} VGG-19 on CIFAR-100, \textbf{Bottom:} ResNet-18 on TinyImageNet. Repeated over $3$ random seeds.}
\label{Fig:Balance-1}
\end{figure}

\end{document}